\documentclass[11pt]{article}

\usepackage[margin=1in]{geometry}
\usepackage{amsmath,amssymb,amsthm}
\usepackage{mathtools}
\usepackage{graphicx}
\usepackage{enumitem}
\usepackage{natbib}
\usepackage{booktabs}
\usepackage{algorithm}
\usepackage{algpseudocode}
\usepackage{caption}
\usepackage{placeins}
\usepackage{authblk}
\usepackage[colorlinks=true,linkcolor=blue,citecolor=blue,urlcolor=blue]{hyperref}
\usepackage{microtype}


\newtheorem{theorem}{Theorem}[section]

\title{Zero-Inflated Gaussian Distributions Enable Parameter-Space\\
Sparsity in Estimation-of-Distribution Algorithms}

\author[1]{Andreas Faust\thanks{Corresponding author: \href{mailto:andreas.s.faust@gmail.com}{andreas.s.faust@gmail.com}}}
\author[2]{Sven Nitzsche}
\author[3]{Juergen Becker}
\affil[1]{University of Freiburg, Freiburg, Germany}
\affil[2]{FZI Research Center for Information Technology, Karlsruhe, Germany}
\affil[3]{Karlsruhe Institute of Technology, Karlsruhe, Germany}

\date{}

\begin{document}

\maketitle

\begin{abstract}
Estimation-of-distribution algorithms (EDAs) are a powerful class of evolutionary methods for black-box optimization, especially when little is known about the structure of the objective. Whereas classical evolutionary algorithms rely on hand-designed mutation and crossover operators, hard to devise for unknown problem structures, and a source of bias, EDAs sidestep operator design entirely: they fit a probability distribution to the best individuals and sample the next generation from it. EDAs are well established on continuous parameter spaces, but they have not previously been generalized to sparse ones, in which most coefficients of a good solution are exactly zero. Existing sparse black-box optimizers therefore reintroduce exactly what EDAs were designed to avoid: hand-crafted sparsity operators, bi-level schemes alternating between support set and active values, zeroing thresholds, and other baked-in assumptions. We close this gap by proposing multivariate zero-inflated Gaussian (ZIG) distributions as EDA sampling laws. A latent Gaussian model with separate indicator and value dimensions represents sparsity patterns, correlations among active parameters, and the interactions between the two, so sparsity patterns and active values are optimized jointly, hierarchy-free. We show that the latent parameters of this model are identifiable from observed samples, unlike in the missing-data settings where related constructions originate, and introduce practical amortized inversion-based estimators for them. The estimators accurately recover latent correlation structures, and on the Lunar Lander benchmark the resulting ZIG-EDA converges faster and reaches higher final returns than a dense Gaussian EDA, a hand-crafted sparse evolutionary algorithm, and an ad-hoc sparse EDA, while finding controllers with only a small fraction of parameters active.
\end{abstract}

\section{Introduction}
\label{sec:intro}

Many optimization problems have solutions that are naturally sparse: only a small subset of the available parameters needs to be active, while the remaining parameters are exactly zero. Sparse solutions are desirable for several reasons. First, sparsity acts as a form of regularization, reducing overfitting and promoting generalization \citep{tibshirani1996regression,ravikumar2009sparse}. Second, it enables interpretability by highlighting which parameters matter \citep{mitchell1988,george1993}. Third, sparse solutions are computationally efficient, as they reduce model complexity and evaluation costs \citep{donoho_compressed_2006,han2015deepcompression}. Examples include variable selection in regression \citep{tibshirani1996regression}, compression of neural networks \citep{han2015deepcompression}, and portfolio optimization in finance \citep{loris2009sparse}.

This work targets a specific problem class: \emph{fixed-dimensional, sparse, gradient-free black-box optimization}, where no additional assumptions about the problem structure---such as separability or compositionality of the parameters---can be made. Sparsity in optimal solutions often coincides with discontinuities in the objective function, so zeroth-order optimization strategies are needed in the general case \citep{Conn2009,Spall2003,GhadimiLan2013}. For dense continuous spaces, estimation-of-distribution algorithms (EDAs) provide exactly the kind of assumption-free optimizer this problem class calls for. An EDA replaces the variation operators of a population-based evolutionary algorithm with a single statistical step: in each generation, a probability distribution is estimated over the surviving individuals, and the next generation is sampled from that distribution. Instead of relying on hand-designed mutation and crossover operators---which are difficult to devise when the problem structure is unknown, and which bias the search---the statistics of the best solutions found so far drive the search directly. However, EDAs have not previously been generalized to sparse parameter spaces. As we review in Section~\ref{sec:background}, existing approaches to sparse black-box optimization instead reintroduce additional assumptions and design choices: zeroing thresholds, switching conditions for bi-level schemes, hand-crafted sparsity-aware mutation operators, or other hyperparameters that couple the optimizer to the problem. This work originated from the concrete need for an evolutionary optimizer for sparse parameter spaces that, like an EDA in the dense continuous case, requires none of these---to the best of our knowledge, no such method existed.

\paragraph{Contributions.}
We propose the use of zero-inflated Gaussian distributions as sampling laws in EDAs. A zero-inflated Gaussian places a point mass at exactly zero next to a continuous Gaussian component, and therefore provides a natural way to encode sparsity: the point mass represents inactive (``off'') parameters, while the Gaussian component represents active values. In the context of optimization, this corresponds to explicitly searching over sparse candidate solutions rather than enforcing sparsity through a separate constraint or penalty schedule. Concretely, we make the following contributions:
\begin{itemize}[itemsep=0.2em]
    \item We formulate a multivariate zero-inflated Gaussian distribution through a latent Gaussian model with two latent variables per observed dimension---one governing the zero indicator, one the active value---embedded in a full latent correlation model (Section~\ref{sec:model}). The distribution can represent dependencies of three types simultaneously: between continuous values (value--value), between sparsity indicators (mask--mask), and between sparsity and continuous values (value--mask). Sparsity patterns and active parameters are thus optimized jointly rather than in alternating stages. Our proposed sampling law requires no additional hyperparameters or assumptions: it drives the search directly with the statistics of the best solutions found so far, dynamically adapting to the dependencies between sparsity structure and active values of each specific problem.
    \item We show that in the EDA setting the latent parameters of this model are identifiable from observed samples, and we clarify the role of the itemwise conditional independence (ICIN) assumption, which in our setting resolves a parameter redundancy rather than restricting the model class (Section~\ref{sec:identifiability}, with proofs in Appendix~\ref{appendix:identifiability}).
    \item We derive exact attenuation relations between latent and observed correlations and introduce practical amortized inversion-based estimators that recover the latent correlation matrix from data (Section~\ref{sec:estimation}, with derivations in Appendix~\ref{appendix:attenuation}).
    \item We evaluate the approach empirically (Section~\ref{sec:experiments}): the estimators accurately recover latent correlation structures across a broad range of configurations, and on the \texttt{LunarLanderContinuous-v3} benchmark the resulting ZIG-EDA converges faster and attains higher final returns than a dense Gaussian EDA, a hand-crafted sparse evolutionary algorithm, and an ad-hoc sparse EDA, while producing controllers with only about 12 of 90 parameters active.
\end{itemize}

To the best of our knowledge, this is the first connection between the latent Gaussian models studied in the missing-data literature and sparse evolutionary optimization. An open-source implementation of the proposed distribution and its estimators is available at \url{https://github.com/ASFaust/SparseDataModel}.

\paragraph{Organization.}
Section~\ref{sec:background} reviews EDAs, existing approaches to sparse optimization, and the model families our distribution builds on. Sections~\ref{sec:model}--\ref{sec:estimation} develop the method in three steps that mirror what an EDA needs from its sampling law: Section~\ref{sec:model} defines the distribution we sample candidate solutions from; Section~\ref{sec:identifiability} establishes that its parameters can be recovered from samples at all---the prerequisite for refitting the distribution to the elite individuals in every generation; and Section~\ref{sec:estimation} derives the practical estimators that perform this fit. Section~\ref{sec:experiments} evaluates the estimators in isolation and the resulting ZIG-EDA against baselines, and Section~\ref{sec:conclusion} concludes.

\section{Background and Related Work}
\label{sec:background}

\subsection{Estimation-of-distribution algorithms}
\label{sec:eda-background}

Evolutionary algorithms explore a search space by mutating and recombining candidate solutions iteratively, using hand-crafted variation operators. Designing appropriate operators is difficult when little is known about the objective, and poorly chosen operators slow the search down or bias it toward particular kinds of solutions. Estimation-of-distribution algorithms \citep[EDAs;][]{LarranagaLozano2002} take a different approach: instead of explicitly designing variation operators, they maintain an explicit probability distribution over the search space. In each generation, an EDA (i) samples a population of candidate solutions from its current distribution, (ii) evaluates them on the objective, (iii) selects the most successful individuals, and (iv) refits the distribution to the selected individuals. The fitted distribution thus accumulates information about the structure of high-quality solutions, and the search progresses by alternating between sampling and refitting.

The choice of distribution family determines what structure an EDA can exploit. Early EDAs used independent marginals; later variants capture dependencies between variables, for example through Bayesian networks over discrete variables \citep{PelikanGoldbergCantuPaz1999BOA} or multivariate Gaussians over continuous ones. Copula-based EDAs \citep{Soto2012ModelingWithCopulas} decouple the marginal distributions from the dependence structure and can model arbitrary continuous marginals. However, such methods treat zero as just another value on the real line rather than as an explicit indicator of sparsity. As a result, they lack a mechanism to distinguish structural zeros from ordinary small values, and the probability of sampling an exact zero is null under any continuous model.

\subsection{Algorithms exploring sparse solutions}
\label{sec:sparse-algos}

Several strategies exist for finding sparse solutions with evolutionary methods. One line of work designs mutation and crossover operators that explicitly enforce sparsity in the offspring \citep{Wang2025SparseEA_AGDS}. Another family adopts a hierarchical or bi-level design, alternating between optimizing a sparse support set (e.g., a binary mask) and optimizing the continuous parameters within that support, effectively separating the discrete sparsity structure from the continuous search \citep{zhang2022bip,li2023differentiable_transportation_pruning,guan2020dais}. All of these approaches require additional assumptions and design choices---zeroing thresholds, switching conditions between levels, or hand-crafted operators---whose settings couple the algorithm to the problem instance. In contrast, the sampling law proposed in this work drives the search directly with the statistics of the selected individuals and adapts the coupling between sparsity pattern and active values automatically.

\subsection{Zero-inflated and spike-and-slab models}
\label{sec:zig-background}

Distributions that mix a point mass at zero with a continuous component appear in several literatures under different names. In Bayesian variable selection, \emph{spike-and-slab priors} place such a mixture on regression coefficients to infer which covariates enter a model \citep{mitchell1988,george1993}. In applied statistics, \emph{zero-inflated} models describe observed data with excess zeros, treating the mixture as a likelihood or data model rather than as a prior. Although both share the same marginal functional form, the two framings differ fundamentally: a spike-and-slab prior encodes beliefs about parameters and is updated through a likelihood, whereas a zero-inflated model directly describes observable quantities.

Our use of the mixture falls squarely into the second category. We never perform Bayesian updating; instead, we fit the distribution to the genomes of selected individuals and sample new candidates from it. The distribution is a \emph{sampling law over candidate solutions}---a data model---and we therefore adopt the name \emph{zero-inflated Gaussian} (ZIG) throughout this paper.

A second relevant line of work comes from missing-data statistics. \citet{rubin1976} introduced the general framework for inference under missingness mechanisms. Recent approaches such as \citet{feldman2024} combine mixed marginals with Gaussian copula models in order to capture dependencies between observed values and missingness patterns, and \citet{ICIN} introduced the itemwise conditionally independent nonresponse (ICIN) assumption to obtain identifiability in nonignorable missing-data models. Our model construction---a latent Gaussian variable thresholded to decide whether a value is zeroed---is mathematically close to these probit-style missingness models. The crucial difference lies in the interpretation: in missing-data settings the masked values are real but unobserved, and the analyst's goal is to reason about them; in our setting a zero is a structural feature of the candidate solution itself, and no ``true value behind the zero'' exists. As we show in Section~\ref{sec:identifiability}, this difference makes the latent parameters directly identifiable in our setting, with ICIN serving only to fix a parameter redundancy.

\begin{figure}[t]
    \centering
    \includegraphics[width=0.62\linewidth]{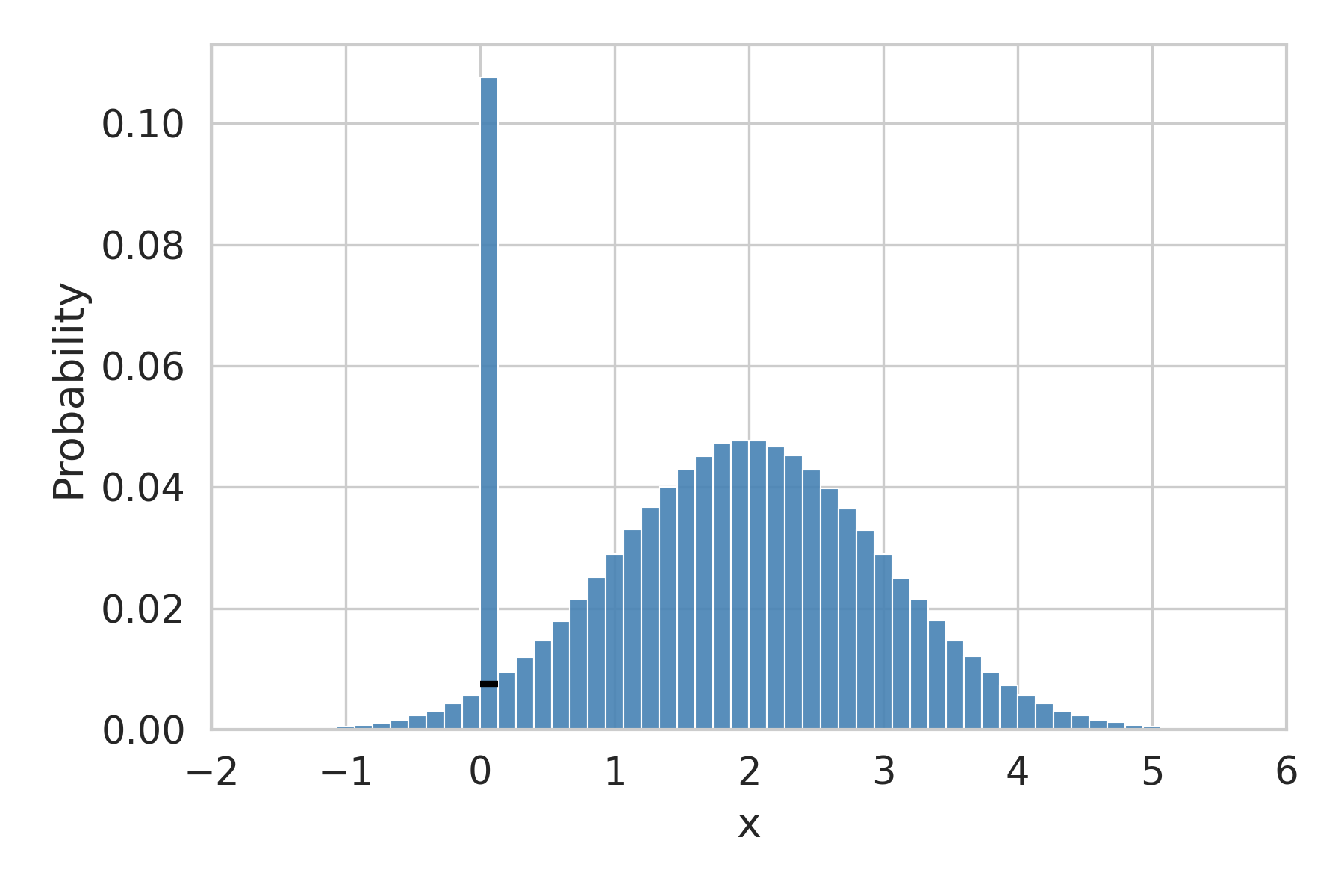}
\caption{Histogram of a univariate zero-inflated Gaussian with $10\%$ zero-inflation and a Gaussian component with $\mu = 2, \sigma = 1$.
The bar at zero is visually split to distinguish the exact zeros contributed by the point mass from the near-zero values originating from the Gaussian component.}
    \label{fig:zig_hist_01}
\end{figure}

Figure~\ref{fig:zig_hist_01} illustrates the defining characteristic of a zero-inflated Gaussian: a fraction of the probability mass is concentrated at exactly zero, while the rest is spread over a Gaussian.

\section{The Zero-Inflated Gaussian Model}
\label{sec:model}

We represent each dimension of a multivariate zero-inflated Gaussian by explicitly separating the zero component and the Gaussian component. The point mass is fixed at zero, while the active values are modeled as Gaussian. This ensures a clear separation of the two components: since a Gaussian assigns zero probability to any single value, observations at exactly zero can only come from the point mass, while all nonzero values belong to the Gaussian component.

Formally, for $d$ observed dimensions we introduce $2d$ latent Gaussian variables:
a \emph{value-latent} $V_i$ for each dimension $i \in \{1,\dots,d\}$, and a \emph{mask-latent} $M_i$ that determines whether the observed entry is set to zero or drawn from the Gaussian component. We collect these into
\[
Z = (V_1, \dots, V_d, \, M_1, \dots, M_d)^\top \sim \mathcal{N}(0, \Sigma),
\]
with $\Sigma \in \mathbb{R}^{2d \times 2d}$ a correlation matrix capturing dependencies among all latents.

Each observation $X_i$ is then generated as
\[
X_i =
\begin{cases}
0, & \text{if } M_i < \tau_i, \\
\mu_i + \sigma_i V_i, & \text{otherwise},
\end{cases}
\]
where $\tau_i \in \mathbb{R}$ is a threshold controlling the probability of a zero, and $(\mu_i, \sigma_i)$ are the parameters of the Gaussian component. We call $p_i = \Pr(X_i \neq 0)$ the \emph{activation probability} of dimension $i$. Finally, the observed vector is
\[
X = (X_1, \dots, X_d)^\top.
\]

Because all $2d$ latents share one correlation matrix, the model represents dependencies of three types simultaneously: between continuous values (value--value), between sparsity indicators (mask--mask), and between sparsity and continuous values (value--mask). In this way, sparsity patterns and active parameters can be optimized jointly rather than in alternating stages, while preserving the ability to capture interactions both across and within modes.

\section{Identifiability and the ICIN Assumption}
\label{sec:identifiability}

Inside an EDA loop, the distribution of Section~\ref{sec:model} is refitted to the elite individuals in every generation, so everything hinges on being able to estimate its parameters from samples. Before we can do so, we have to recognize that several different sets of latent parameters of the described model can create the same observed distribution. The ambiguity arises from correlations between the value-latent $V_i$ and the mask-latent $M_i$ of the same dimension. These correlations shift the observed mean and variance of the Gaussian component away from the latent mean and variance of $V_i$, by biasing which parts of $V_i$ are masked how often.

For example, a large positive correlation causes low value-latents to often coincide with low mask-latents, masking the lower part of the value-latent more often than the upper part. Therefore, any observed marginal distribution can be explained by multiple different combinations of latent $\operatorname{corr}(V_i,M_i)$ and $(\mu_i,\sigma_i)$. Figure~\ref{fig:pdf_correlation} illustrates this effect.

\begin{figure}[t]
    \centering
    \includegraphics[width=0.62\linewidth]{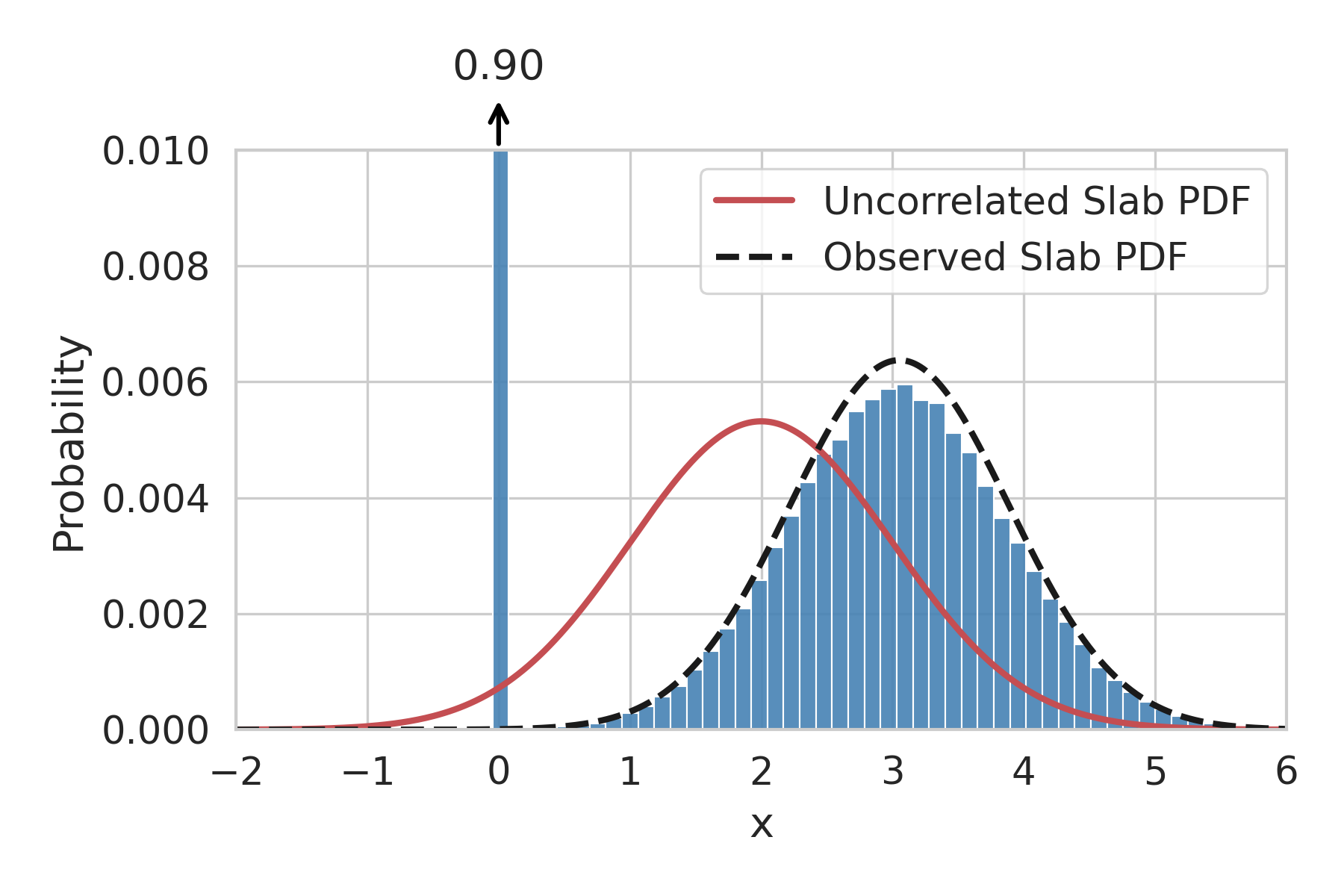}
    \caption{Effect of correlation between $V_i$ and $M_i$.
    Parameters: activation probability $p = 0.1$, $\mu = 2$, $\sigma = 1$, $\operatorname{corr}(V_i,M_i) = 0.6$.
    The red curve shows what the density of the Gaussian component would be if $\operatorname{corr}(V_i,M_i)=0$, in which case $\mu$ and $\sigma$ are directly recoverable.
    The dashed curve shows the observed density under positive correlation: the estimated $\hat{\mu}$ and $\hat{\sigma}$ are shifted relative to the latent values.}
    \label{fig:pdf_correlation}
\end{figure}

To resolve this non-identifiability, we impose the \emph{itemwise conditional independence} (ICIN) assumption \citep{ICIN}:
\[
\operatorname{corr}(V_i, M_i) = 0 \quad \text{for all } i \in \{1,\dots,d\}.
\]
Correlations between latents of \emph{different} dimensions, including value--mask correlations $\operatorname{corr}(V_i, M_j)$ for $i \neq j$, remain unrestricted.

This assumption is essential for identifiability, but its implications differ across domains. In missing-data settings, analysts interpret $V_i$ as the unobserved ``true'' values before censoring, and try to infer the masked values for imputation purposes. But without auxiliary information, distinguishing between correlations of $M_i$ and $V_i$ and shifts in $\mu_i$ and $\sigma_i$ is impossible. \citet{feldman2024} address this problem by incorporating auxiliary quantile information, which enables them to estimate correlations between value- and mask-latents that would otherwise remain hidden. In optimization contexts, however, we do not attribute meaning to a pre-masked latent distribution: the zero-inflated Gaussian law itself is the object of interest. Therefore, ICIN serves only to resolve non-identifiability and carries no substantive consequences.

Another form of ambiguity can arise in degenerate cases, if some observed marginal is always zero or always Gaussian. Such dimensions can be modeled with simpler marginal models (either modeling the dimension as constantly $0$, or as purely Gaussian by removing $M_i$).

Formal proofs demonstrating both the necessity and sufficiency of ICIN for model identifiability are provided in Appendix~\ref{appendix:identifiability}.

\section{Estimation of Model Parameters}
\label{sec:estimation}

Given a dataset $X \in \mathbb{R}^{n \times d}$ of observed zero-inflated Gaussian values,
we aim to estimate the parameters $(\mu_i, \sigma_i, \tau_i)$ of each marginal distribution as well as the latent correlation matrix $\Sigma \in [-1,1]^{2 d \times 2 d}$ of the latent Gaussian variables.

\subsection{Estimating the marginal parameters}

Let $p_i$ denote the empirical activation probability of dimension $i$, given by the fraction of nonzero values in $X_i$.
This activation probability determines the threshold $\tau_i$ on the mask-latent $M_i$ via
\[
\tau_i = \Phi^{-1}(1-p_i),
\]
where $\Phi^{-1}$ is the inverse cumulative distribution function of the standard normal.

Under ICIN, we can directly estimate the dimension-wise parameters of the Gaussian component from the observed data $X_i$ as follows:
\begin{align*}
\hat{\mu}_i &= \operatorname{mean}\{X_i \;|\; X_i \neq 0\}, \\
\hat{\sigma}_i &= \operatorname{std}\{X_i \;|\; X_i \neq 0\}.
\end{align*}

\subsection{Estimating the correlation matrix}
\label{sec:corr-estimation}

To estimate the correlation matrix, we map each observed $X_i$ to a standardized value $\tilde{X}_i$ and a binary indicator $I_i$:
\[
\tilde{X}_i =
\begin{cases}
0, & X_i = 0, \\
\dfrac{X_i - \hat{\mu}_i}{\hat{\sigma}_i}, & X_i \neq 0,
\end{cases}
\qquad
I_i =
\begin{cases}
0, & X_i = 0, \\
1, & X_i \neq 0.
\end{cases}
\]

We then compute an initial, \emph{na\"ive} Pearson correlation matrix $P$ from the extended observed data
\[
Y = (\tilde{X}_1,\dots,\tilde{X}_d,\, I_1,\dots,I_d).
\]

$P$ and $\Sigma \in \mathbb{R}^{2d \times 2d}$ can both be partitioned into four blocks:
\[
\begin{aligned}
P &=
\begin{bmatrix}
P_{VV} & P_{VM} \\
P_{VM}^\top & P_{MM}
\end{bmatrix},
&
\Sigma &=
\begin{bmatrix}
\Sigma_{VV} & \Sigma_{VM} \\
\Sigma_{VM}^\top & \Sigma_{MM}
\end{bmatrix},
\end{aligned}
\]
corresponding to correlations among continuous values ($VV$), among masks ($MM$), and across the two types ($VM$).

\subsubsection{Attenuation and inverse mappings}
\label{sec:attenuation}

Due to the thresholding and masking operations in the generation of $X$,
the empirical correlation blocks $P_{\cdot\cdot}$ are systematically
distorted with respect to their latent counterparts $\Sigma_{\cdot\cdot}$.

To reverse these distortions, we derive integral expressions for the
expected empirical correlations
\[
\mathbb{E}\!\left[P_{\cdot\cdot}(i,j)\right]
= f_{\cdot\cdot}\!\left(p_i,\,p_j,\,\Sigma_{\cdot\cdot}(i,j),\,\Sigma_{\text{aux}}\right),
\]
where \(p_i=\Pr(X_i\ne0)\) encodes the marginal activation probability,
and \(\Sigma_{\text{aux}}\) collects auxiliary latent correlations
relevant for the block type (e.g.\ $\Sigma_{MM}$ in value--mask terms).
Each function \(f_{\cdot\cdot}\) is strictly monotone in the latent correlation
of interest, ensuring the existence of a unique inverse mapping.
All integral expressions and monotonicity proofs are provided in Appendix~\ref{appendix:attenuation}.

By recursively substituting previously identified quantities
(e.g.\ replacing $\Sigma_{MM}$ by its observable form),
we obtain minimal inverse relations depending only on observables.
The resulting identification problems are:
\[
\begin{aligned}
\text{Mask--mask:}\quad
&(p_i,\,p_j,\,P_{MM}(i,j))
\;\mapsto\; \Sigma_{MM}(i,j),
\\[0.4em]
\text{Value--mask:}\quad
&(p_i,\,p_j,\,P_{VM}(i,j),\,P_{MM}(i,j))
\;\mapsto\; \Sigma_{VM}(i,j),
\\[0.4em]
\text{Value--value:}\quad
&(p_i,\,p_j,\,P_{VV}(i,j),\,P_{VM}(i,j),\,P_{VM}(j,i),\,P_{MM}(i,j))
\;\mapsto\; \Sigma_{VV}(i,j).
\end{aligned}
\]
Thus, each latent correlation is uniquely determined by its
empirical counterpart and a minimal set of auxiliary observables.

\subsubsection{Amortized inversion}
\label{sec:amortized}

To obtain a practical estimator of $\Sigma$ from $P$,
we employ amortized inference over synthetic training data.
A large dataset is generated with known latent correlations $\Sigma$,
from which the corresponding empirical correlations $P$ are computed.
Function approximators are then trained to learn the previously defined mappings for each block type separately.

We use compact multilayer perceptrons (MLPs) trained via gradient descent to minimize mean-squared reconstruction error.
Because all involved quantities (\(P,\Sigma,p_i,p_j\)) are bounded,
the synthetic dataset fully covers the observable manifold,
and the learned mappings generalize across arbitrary dimensionalities
when applied pairwise to $(i,j)$-entries of new zero-inflated Gaussian data.
Training details can be found in Appendix~\ref{appendix:training}.

\subsubsection{Projection to the correlation manifold}

The estimated matrix $\hat{\Sigma}$ is not guaranteed to be positive semi-definite, due both to estimation errors in $P$ and approximation errors in the learned mappings. We therefore project $\hat{\Sigma}$ to the nearest valid correlation matrix in three steps:
\begin{enumerate}[label=(\roman*)]
    \item \textbf{Symmetrization:} set $\hat{\Sigma}_s = (\hat{\Sigma} + \hat{\Sigma}^\top)/2$,
    \item \textbf{Eigenvalue correction:} compute the eigendecomposition $\hat{\Sigma}_s = V \Lambda V^\top$ and replace negative entries of $\Lambda$ by a small constant, yielding $\hat{\Sigma}_+ = V \Lambda_+ V^\top$,
    \item \textbf{Diagonal normalization:} define $D = \operatorname{diag}(1/\sqrt{(\hat{\Sigma}_+)_{11}}, \dots, 1/\sqrt{(\hat{\Sigma}_+)_{dd}})$ and set $\hat{\Sigma}^\star = D \hat{\Sigma}_+ D$, which enforces $\operatorname{diag}(\hat{\Sigma}^\star) = (1,\dots,1)$.
\end{enumerate}
The result $\hat{\Sigma}^\star$ is symmetric positive semi-definite with unit diagonal entries, i.e.\ a valid correlation matrix.

\subsection{Computational complexity}

Runtime scales with sample size $n$ and dimensionality $d$.
Computing marginal statistics is $\mathcal{O}(nd)$.
The dominant cost is forming the Pearson correlation matrix
$P \in \mathbb{R}^{2d \times 2d}$, requiring $\mathcal{O}(n d^2)$ operations.
Mapping $P \mapsto \Sigma$ adds $\mathcal{O}(d^2)$ for pairwise MLP evaluations,
and the projection step costs $\mathcal{O}(d^3)$ due to eigendecomposition.
Overall complexity:
\[
\mathcal{O}(n d^2 + d^3),
\]
typically dominated by $\mathcal{O}(n d^2)$, since $n \gg d$ for stable correlation estimates.

\section{Experimental Results}
\label{sec:experiments}

\subsection{Recovering latent parameters}
\label{sec:exp-recovery}

We evaluate the accuracy of the proposed estimation procedure in recovering the latent parameters of the model.
To this end, we instantiate $100$ latent Gaussian models with randomly chosen $\Sigma$, $\mu$, $\sigma$, and $p$.
The observed dimensionality is set to $32$, corresponding to a latent dimensionality of $64$.
From each model, we sample $100{,}000$ observations.
This setup yields
\[
\Bigl(\frac{64\,(64-1)}{2} - 32\Bigr) \times 100 \;=\; 198{,}400
\]
to-be-recovered correlations.

Figure~\ref{fig:hexbin_corr} compares the recovered correlations $\hat{\rho}$ with the ground-truth $\rho$.
A quantitative summary of the recovery performance is given in Table~\ref{tab:corr_metrics}.
The concordance correlation coefficient (CCC) is more informative than the Pearson correlation $r$ in this setting, as CCC measures agreement with the identity function rather than mere linear association.

\begin{table}[t]
\caption{Summary statistics over all recovered correlations.}
\label{tab:corr_metrics}
\begin{center}
\begin{tabular}{cccc}
\toprule
\textbf{Mean($|\rho|$)} & \textbf{Mean($|\hat{\rho}-\rho|$)} & \textbf{MSE} & \textbf{CCC} \\
\midrule
0.323 & 0.0272 & $1.58\times 10^{-3}$ & 0.994 \\
\bottomrule
\end{tabular}
\end{center}
\end{table}

\begin{figure}[t]
    \centering
    \includegraphics[width=0.6\linewidth]{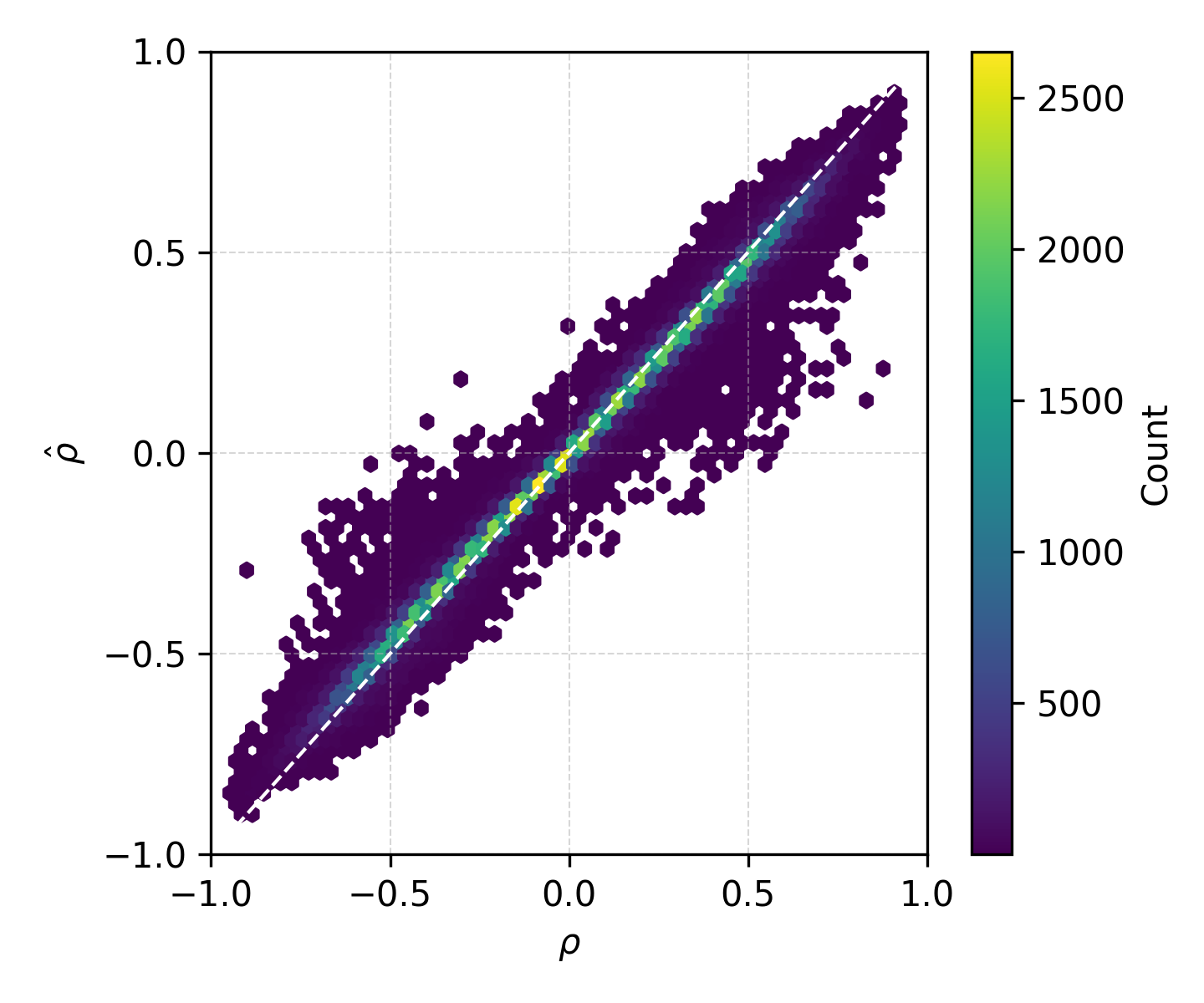}
    \caption{Recovered vs.\ true off-diagonal correlations. The dashed line marks the identity.}
    \label{fig:hexbin_corr}
\end{figure}

The quantitative metrics indicate a close match between true and recovered correlations:
both the mean absolute and mean squared errors are small, while a CCC near unity confirms strong overall agreement.
These results suggest that the estimation procedure accurately reconstructs the latent dependence structure across a broad range of correlation magnitudes.

A block-specific analysis reveals that the residual underestimation is most pronounced in the $\Sigma_{VV}$ block.
Figure~\ref{fig:gg_corr_err} shows absolute correlation errors as a function of $\min(p_i, p_j)$, where $p_i$ denotes the marginal activation probability.
Errors increase substantially when $\min(p_i, p_j)$ is small, meaning that at least one dimension is rarely active.
This pattern reflects the reduced effective sample size in such regions, which limits the precision of correlation recovery.

\begin{figure}[t]
    \centering
    \includegraphics[width=0.6\linewidth]{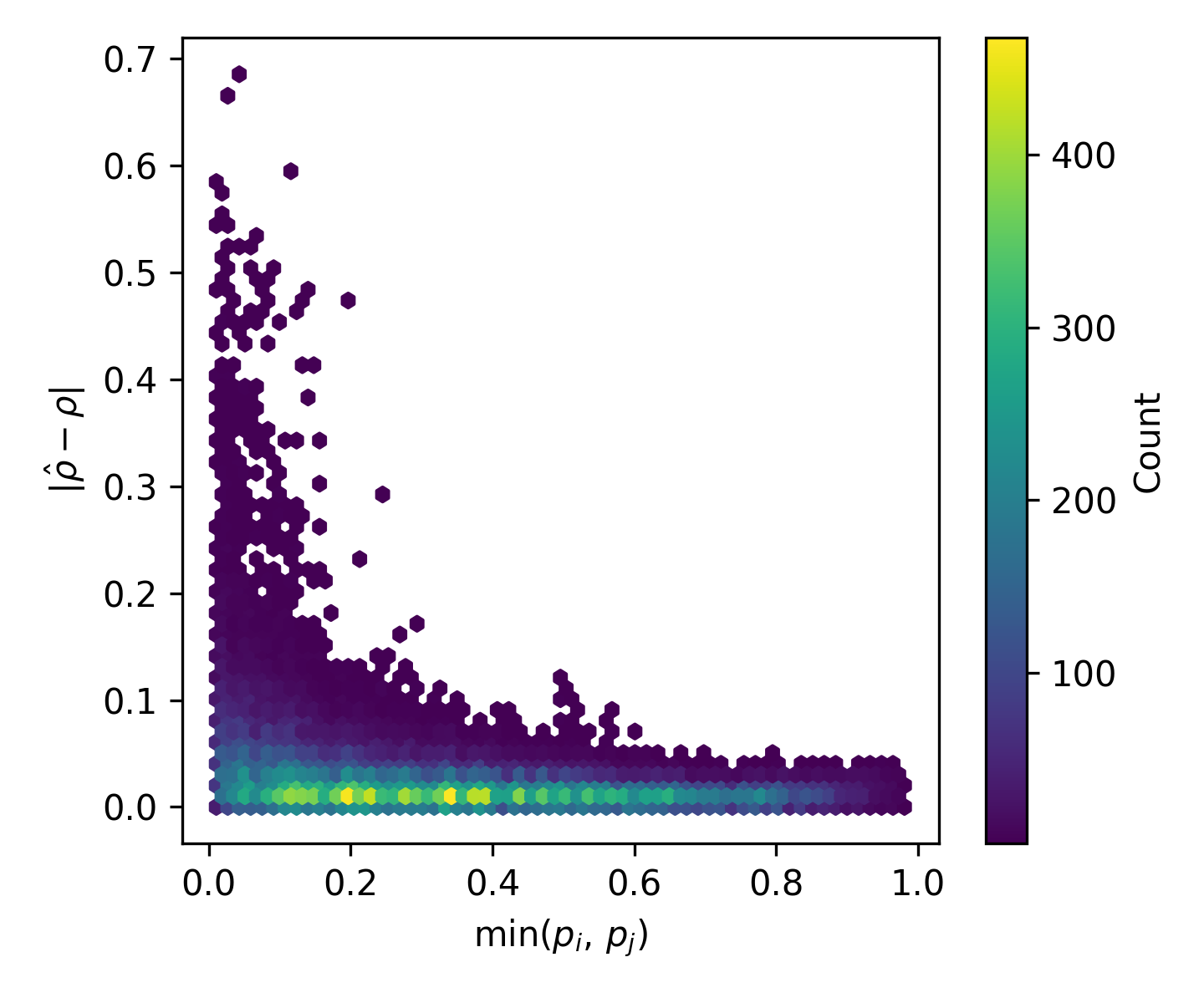}
    \caption{Absolute correlation errors in value--value blocks as a function of $\min(p_i, p_j)$. Larger errors occur when at least one dimension is rarely active.}
    \label{fig:gg_corr_err}
\end{figure}

\FloatBarrier

\subsection{Application to Lunar Lander}
\label{sec:exp-lander}

The \texttt{LunarLanderContinuous-v3} environment from Gymnasium is a widely used reinforcement learning benchmark. The task is to control a simulated spacecraft and guide it to a soft landing between two flags on the lunar surface. The state space consists of eight continuous variables, including position, velocity, orientation, and leg contact indicators. The action space is two-dimensional and continuous, representing throttle values for the main engine and the side engines in the range $[-1,1]$. Most commonly, this benchmark is solved using continuous reinforcement learning algorithms such as policy gradient or actor--critic methods. Although the environment return is not explicitly bounded, the physics and reward-shaping terms implicitly constrain the achievable episode return: in practice, scores around $300$ correspond to near-perfect trajectories.

\subsubsection{Controller}

We parameterize candidate policies as sparse quadratic functions of the observed state. For an eight-dimensional state vector $s = (s_1,\dots,s_8)$, the two action dimensions are computed as
\[
a_k = b_k
  + \sum_{i=1}^8 w_{ki}\, s_i
  + \sum_{1 \le i \le j \le 8} u_{kij}\, s_i s_j,
\qquad
k \in \{\text{main}, \text{side}\},
\]
where $b_k$ is a bias term, $w_{ki}$ are linear coefficients, and $u_{kij}$ are quadratic coefficients. The full controller has $90$ free parameters. Sparsity arises when many of these coefficients are set to zero, leaving only a small number of active terms to determine the policy.

\subsubsection{Algorithm}
\label{sec:algorithm}

We optimize the controller using a simple estimation-of-distribution algorithm
with elite replay and a persistent fitness map.
In each generation, the algorithm refits a zero-inflated Gaussian distribution
to the best previously observed individuals and re-evaluates them together with newly
sampled candidates. This prevents lucky high-fitness outliers from dominating due to stochasticity
and ensures consistent ranking across generations. Algorithm~\ref{alg:eda-general} states the general procedure; Algorithm~\ref{alg:eda-lander} gives the instantiation used for the controller-recovery experiment in Section~\ref{sec:recovered-controller}.

\begin{algorithm}[t]
\caption{Estimation-of-Distribution with Elite Replay and Fitness Map}
\label{alg:eda-general}
\begin{algorithmic}[1]
\State \textbf{Input:} initial population $\mathcal{P}$ with size $N$, elite size $K$, generations $T$
\State Initialize fitness map $\mathcal{F} \gets \{\}$
\For{$t = 1$ to $T$}
  \For{each $\theta \in \mathcal{P}$}
    \State Evaluate fitness $F(\theta)$
    \State $\mathcal{F}[\theta] \gets F(\theta)$
  \EndFor
  \If{$t = T$}
    \State \Return $\arg\max_{\theta \in \mathcal{F}} \mathcal{F}[\theta]$
  \EndIf
  \State $\mathcal{E} \gets \text{select top } K \text{ from } \mathcal{F}$
  \State Fit zero-inflated Gaussian model $\mathcal{D}_t$ on $\mathcal{E}$
  \State $\mathcal{C} \gets \text{sample } (N{-}K) \text{ individuals } \sim \mathcal{D}_t$
  \State $\mathcal{P} \gets \mathcal{E} \cup \mathcal{C}$
\EndFor
\end{algorithmic}
\end{algorithm}

\begin{algorithm}[t]
\caption{Instantiation for Lunar Lander Quadratic Controller}
\label{alg:eda-lander}
\begin{algorithmic}[1]
\State \textbf{Controller:} quadratic policy with $90$ parameters
(bias, linear, and quadratic terms)
\State \textbf{Hyperparameters:} population size $N{=}1000$, elite size $K{=}100$, generations $T{=}100$
\State \textbf{Initial population:} $\mathcal{P}_0$ sampled from a zero-inflated Gaussian with $\mu_0{=}0$, $\sigma_0{=}2$, $p_0{=}0.5$, and $\Sigma_0{=}I$
\State \textbf{Evaluation:} average return $R_{\text{avg}}(\theta)$ over $S{=}8$ random environment initializations
\State \textbf{Fitness:} $F(\theta) = R_{\text{avg}}(\theta) - 2 \cdot (\text{nonzero param count})$
\State \textbf{Procedure:} run Algorithm~\ref{alg:eda-general} with inputs
$\mathcal{P}_0$, $N$, $K$, $T$
\State \textbf{Output:} final controller $\theta^\star = \arg\max_{\theta \in \mathcal{F}} \mathcal{F}[\theta]$
\end{algorithmic}
\end{algorithm}

\subsubsection{Baselines}
\label{sec:baselines}

To contextualize the proposed method, we compare it against three baselines on the same task:
\begin{enumerate}[itemsep=0.2em]
    \item \textbf{Dense EDA:} a classical dense Gaussian EDA that fits a multivariate Gaussian to the elites. It cannot represent exact zeros and therefore optimizes the raw environment return without a sparsity penalty.
    \item \textbf{Sparse EA:} a hand-crafted evolutionary algorithm with manually designed mutation and crossover operators that explicitly zero out and re-activate parameters. The operators are specified in Appendix~\ref{appendix:sparse-ea}.
    \item \textbf{Ad-hoc sparse EDA:} a classical dense Gaussian EDA in which sparsity is encoded by doubling the genome size. Each polynomial coefficient is represented by two normally distributed variables, a weight $w_i$ and a threshold variable $t_i$, and the effective coefficient is obtained as
    \[
    x_i = w_i \cdot \mathbf{1}\{t_i \ge 0\}.
    \]
    This baseline is the closest comparison to our method: it can represent exact zeros and joint statistics over the doubled genome. The core difference is that its correlation matrix is estimated over the dense parametrization, which includes the values $w_i$ of inactive parameters (where $t_i < 0$) and noisy threshold values $t_i$. This muddies the correlation structure, introducing estimation noise and underestimating the relevant correlations.
\end{enumerate}

All methods use the same selection mechanism (top-$k$ over all previously evaluated individuals), number of generations ($100$), number of survivors ($100$), and population size ($500$). The only differences are the sampling laws for the new population and, for the ad-hoc sparse EDA, the genome representation. The three sparse methods optimize the penalized fitness with a penalty of $2$ per active parameter; reported scores are always the unpenalized environment returns. Each optimizer was run $10$ times with different random seeds.

\subsubsection{Comparison results}
\label{sec:comparison-results}

Figure~\ref{fig:score_progression} shows the progression of the environment return of the best individual found so far, and Figure~\ref{fig:sparsity_progression} the corresponding number of active parameters for the three sparsity-aware methods. Figures~\ref{fig:final_scores}--\ref{fig:sparsity_bars} summarize the final scores, the number of generations needed to reach fixed score thresholds, and the sparsity levels at fixed generations. The full numerical results are tabulated in Appendix~\ref{appendix:tables}.

\begin{figure}[t]
    \centering
    \includegraphics[width=0.78\linewidth]{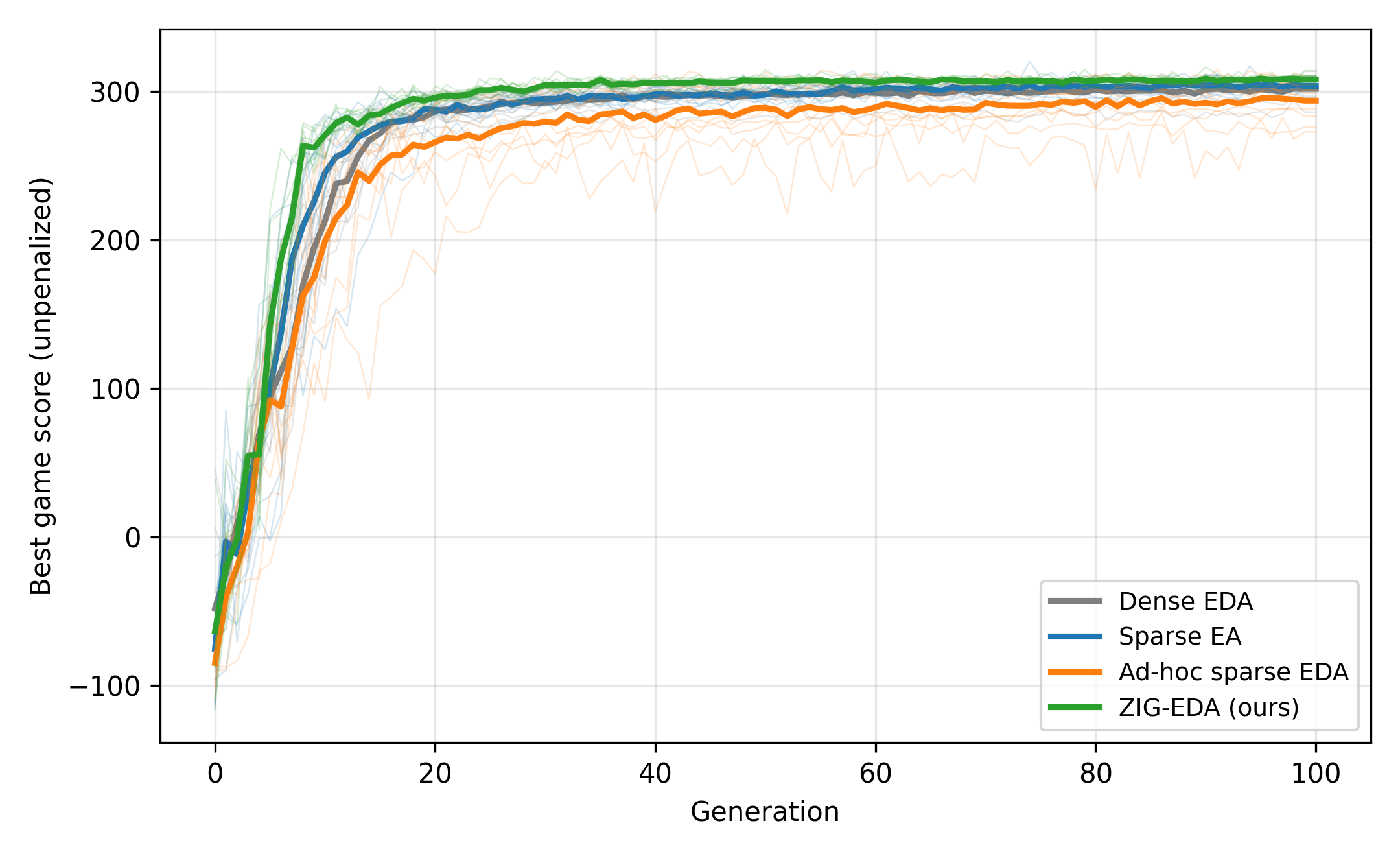}
    \caption{Environment return (unpenalized) of the best individual found so far, over $100$ generations. Thick lines show the mean over $10$ runs; thin lines show individual runs.}
    \label{fig:score_progression}
\end{figure}

\begin{figure}[t]
    \centering
    \includegraphics[width=0.78\linewidth]{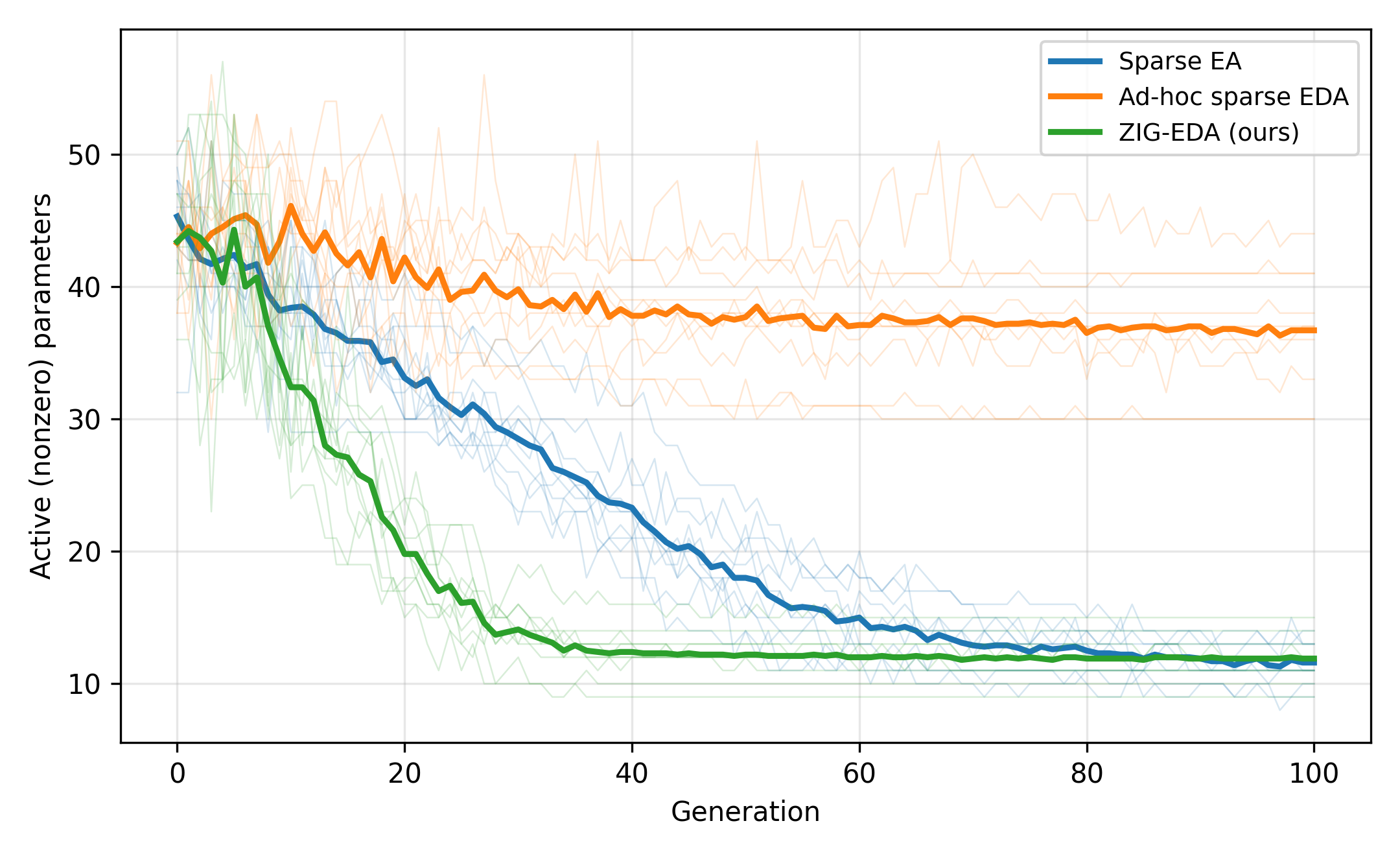}
    \caption{Number of active (nonzero) parameters of the best individual found so far, for the three sparsity-aware methods. The dense EDA always uses all $90$ parameters and is omitted. Thick lines show the mean over $10$ runs; thin lines show individual runs.}
    \label{fig:sparsity_progression}
\end{figure}

\begin{figure}[!htb]
    \centering
    \includegraphics[width=0.62\linewidth]{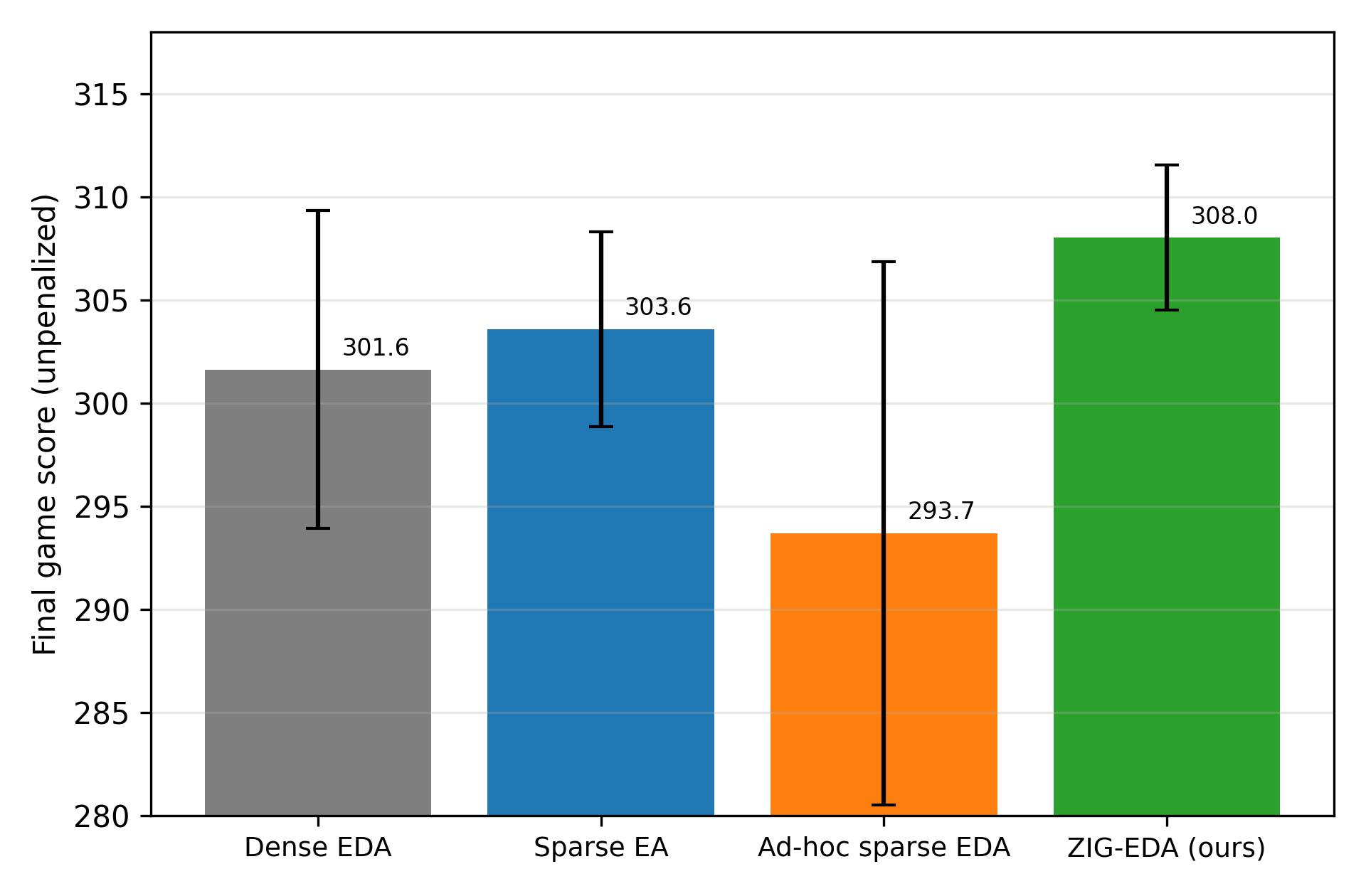}
    \caption{Final unpenalized scores after $100$ generations (mean over $10$ runs; error bars show one standard deviation).}
    \label{fig:final_scores}
\end{figure}

\begin{figure}[t]
    \centering
    \begin{minipage}[t]{0.48\linewidth}
        \centering
        \includegraphics[width=\linewidth]{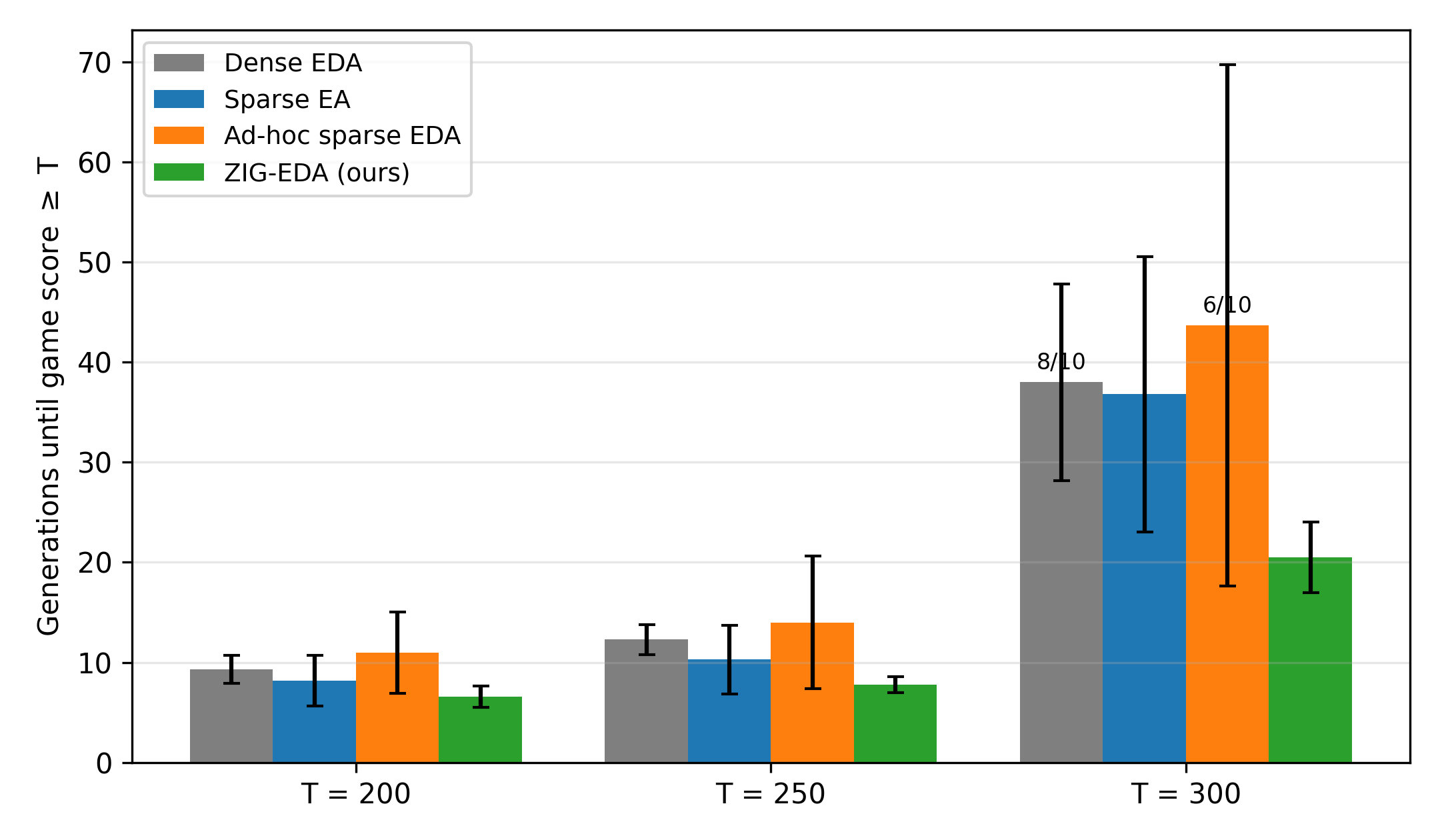}
        \caption{Number of generations until the environment return first reaches the threshold $T$, for $T \in \{200, 250, 300\}$ (mean over the runs that reached the threshold; error bars show one standard deviation). Annotations mark cases where not all $10$ runs reached the threshold.}
        \label{fig:arrival_times}
    \end{minipage}\hfill
    \begin{minipage}[t]{0.48\linewidth}
        \centering
        \includegraphics[width=\linewidth]{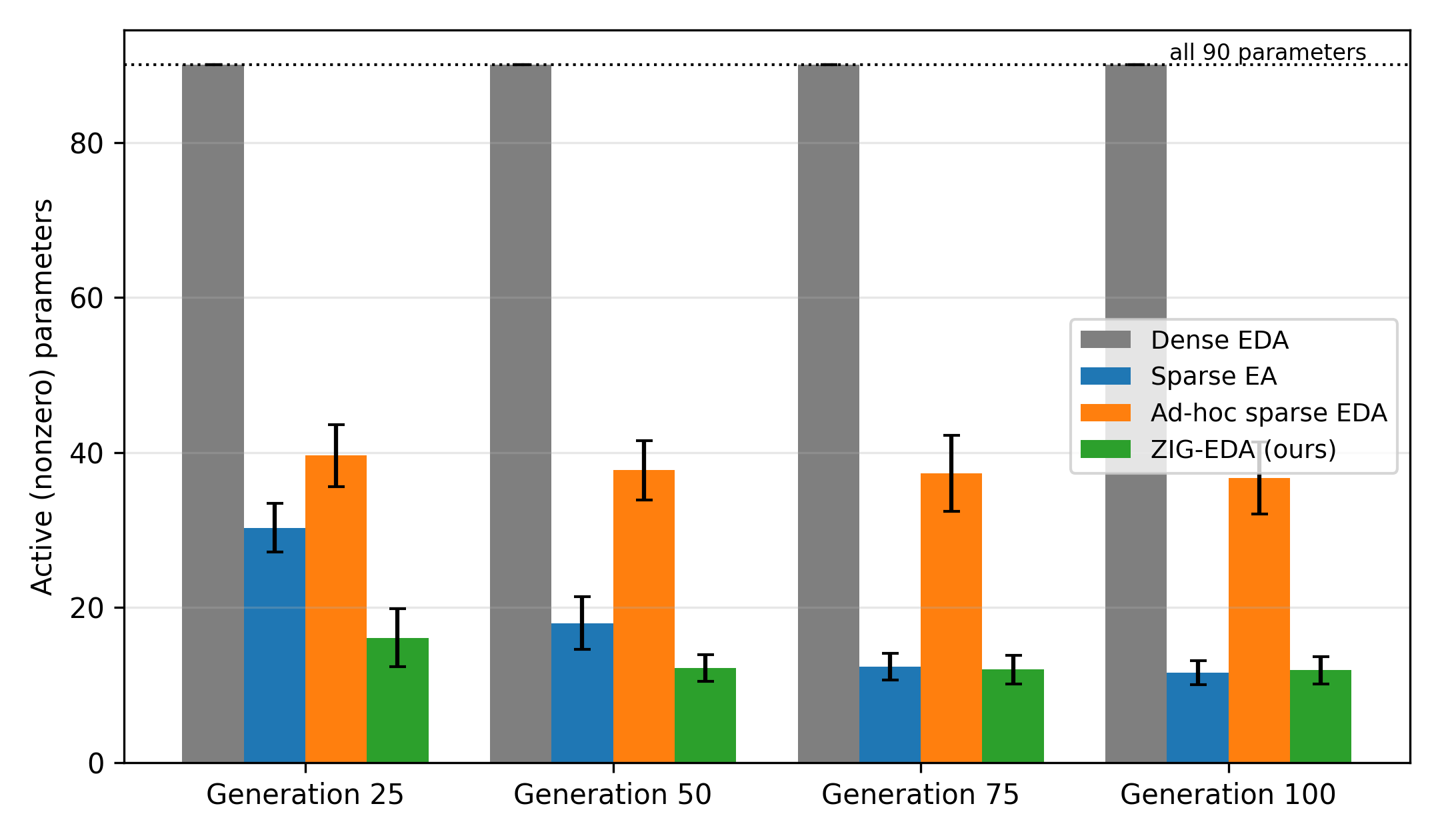}
        \caption{Number of active parameters (out of $90$) at fixed generations (mean over $10$ runs; error bars show one standard deviation).}
        \label{fig:sparsity_bars}
    \end{minipage}
\end{figure}

Three observations stand out. First, the ZIG-EDA converges fastest: it reaches every score threshold earlier than all baselines (Figure~\ref{fig:arrival_times}), needing on average $20.5$ generations to exceed a return of $300$, compared to $36.8$ for the sparse EA and $38$ for the dense EDA---and it is the only EDA variant for which all $10$ runs reached that threshold. Second, it achieves the highest and most consistent final scores ($308.0 \pm 3.5$, Figure~\ref{fig:final_scores}). Notably, it outperforms the dense EDA on this task, which we did not anticipate: for this particular optimization problem, sparse solutions appear to generalize better across random environment initializations, which underscores the regularization capabilities of sparsity. Third, the ZIG-EDA discovers sparse structure quickly: by generation $25$ its best controllers use on average $16$ of $90$ parameters, a level the hand-crafted sparse EA only reaches around generation $50$ (Figure~\ref{fig:sparsity_bars}); both converge to about $12$ active parameters.

The ad-hoc sparse EDA performs worst on all three measures, despite being able to represent exact zeros and joint statistics over its doubled genome. This supports the analysis in Section~\ref{sec:baselines}: estimating the correlation structure over the dense $(w, t)$ parametrization mixes the values of inactive parameters and noisy thresholds into the statistics, which obscures the dependencies that matter for the search. Modeling sparsity explicitly in the sampling law, as the ZIG-EDA does, avoids this problem at the root.

\FloatBarrier

\subsubsection{Recovered controller}
\label{sec:recovered-controller}

Beyond the aggregate comparison, we examine a single controller recovered by the ZIG-EDA in the configuration of Algorithm~\ref{alg:eda-lander}. Figure~\ref{fig:eda_progress} shows the progression of this run. The optimized policy is a sparse quadratic controller with
only $13$ active coefficients out of $90$. Despite its compactness, this controller
is sufficient to achieve stable landings (Figure~\ref{fig:eda_trajectory}). The explicit formulas for the two action
dimensions are given below, corresponding to the main and side engine throttles.
The raw outputs are subsequently clamped to the valid range $[-1,1]$. Notably, the learned policy does not make use of the \texttt{leg\_1}
contact indicator at all.

\[
\begin{aligned}
a_{\text{main}}
  &= -2.5988\,y \;-\; 1.7162\,\dot{y}
     \;-\; 0.0226\,x\,\text{leg}_2
     \;-\; 2.8239\,y^2
     \;+\; 4.0005\,\dot{y}^2
     \;-\; 0.4014\,\text{leg}_2^2, \\[0.75em]
a_{\text{side}}
  &= -1.6502\,\dot{x}
     \;+\; 3.6608\,\theta
     \;-\; 2.1000\,x y
     \;+\; 2.7215\,y \dot{\theta}
     \;+\; 3.0316\,\dot{x}\dot{y}
     \;-\; 3.3986\,\dot{y}\dot{\theta}
     \;+\; 0.0506\,\theta\,\text{leg}_2.
\end{aligned}
\]

\begin{figure}[htbp]
    \centering
    \fbox{\includegraphics[width=0.6\linewidth]{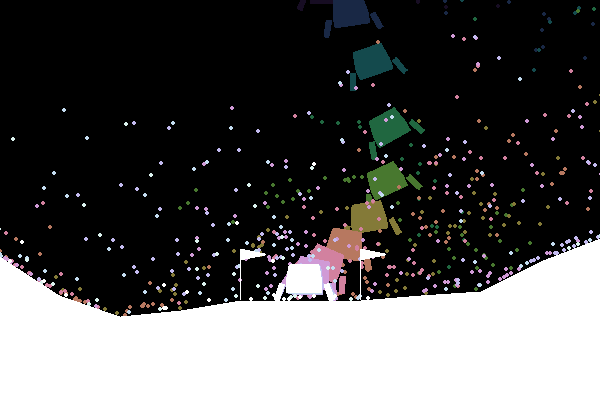}}
    \caption{Temporal overlay of a successful landing episode under the optimized quadratic controller.
    Colors indicate temporal progression.}
\label{fig:eda_trajectory}
\end{figure}

\begin{figure}[htbp]
    \centering
    \includegraphics[width=0.8\linewidth]{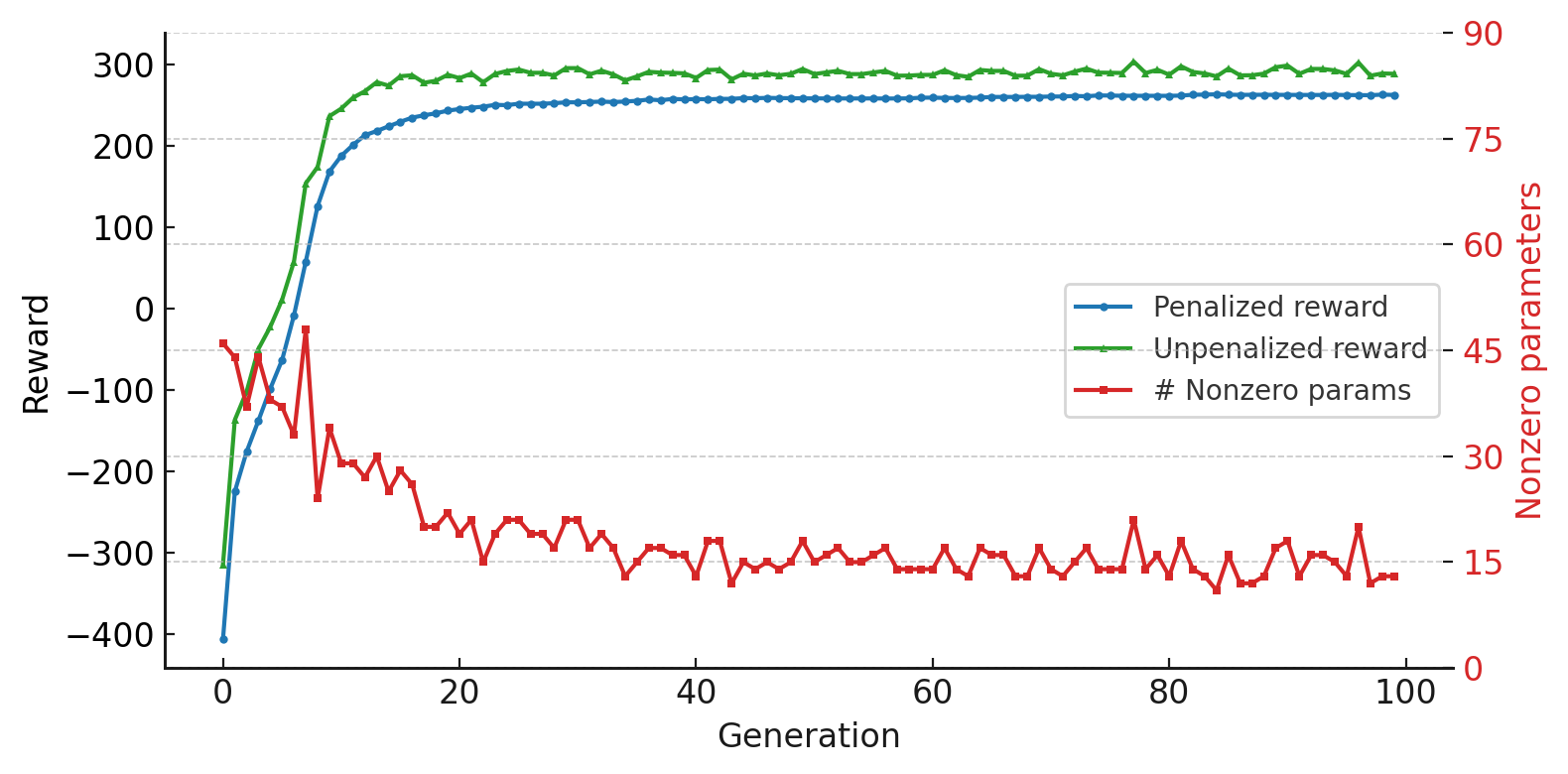}
\caption{Progress of the ZIG-EDA optimization over 100 generations in the configuration of Algorithm~\ref{alg:eda-lander}. The blue curve shows the penalized fitness, i.e.\ the environment return reduced by a penalty of $2$ per active parameter. The green curve shows the corresponding unpenalized return, reflecting the raw environment performance of the controller. The red curve indicates the number of nonzero parameters (out of 90). Gridlines correspond to parameter counts. Solved performance (about $200$ unpenalized return) is reached after only $9$ generations.}
    \label{fig:eda_progress}
\end{figure}

\FloatBarrier

\section{Conclusion}
\label{sec:conclusion}

Sparse black-box optimization previously required baking assumptions into the optimizer: hand-crafted sparsity-aware mutation operators, bi-level schemes alternating between support set and active values, zeroing thresholds, or switching schedules. The central takeaway of this work is that none of these are necessary. By equipping an EDA with a zero-inflated Gaussian sampling law, the defining property of EDAs---that the statistics of the best solutions found so far drive the search, with no operator design---extends to sparse parameter spaces. Explicitly separating value and mask latents lets the model capture dependencies among continuous
parameters, sparsity indicators, and their interactions, so that
sparsity patterns and active parameter values are optimized simultaneously
rather than in alternating stages. The one assumption we do retain is distributional---active values are modeled as correlated Gaussians---but it concerns the search distribution, not the structure of the problem, and is the same assumption that dense Gaussian EDAs already make.

We developed inversion-based estimators for recovering model parameters from data
and showed empirically that the approach accurately recovers latent correlation
structures in high-dimensional settings. Applied to the Lunar Lander benchmark,
the proposed ZIG-EDA converged faster and achieved higher final returns than a dense Gaussian EDA, a hand-crafted sparse evolutionary algorithm, and an ad-hoc sparse EDA, while identifying compact quadratic controllers that achieve stable
landings with only a small fraction of active parameters.

\paragraph{Limitations.}
We do not provide a proof of convergence or a sample-efficiency analysis for the resulting EDA. Convergence proofs for EDAs exist but typically address simplified settings---single-variable problems, additively decomposable functions, or infinite-population limits \citep{KrejcaWitt2020theory}---and do not extend to the high-dimensional, non-separable, and noisy settings that motivate our work. We therefore follow established practice in this area and rely on empirical evaluation. Furthermore, as shown in Section~\ref{sec:exp-recovery}, correlation recovery degrades for rarely active dimensions; this is a sample-efficiency effect rather than a structural limitation, but it implies that very high-dimensional, strongly correlated regimes may require additional regularization or structure on $\Sigma$.

\paragraph{Future work.}
Natural directions include extending the estimators to more general continuous components, such as copula-based marginals; investigating scalability to larger problem classes; and embedding the model within more advanced EDA schemes. In particular, combining the zero-inflated Gaussian sampling law with CMA-ES--style covariance adaptation---iteratively refining the latent correlation matrix with small, regularized updates instead of refitting it from scratch each generation---could reduce estimation noise in high dimensions while retaining the expressiveness of the full latent correlation model. Restricting $\Sigma$ to structured families (low-rank, block, or sparse precision structure) would recover several classical structured EDAs as special cases of the framework presented here.

\clearpage
\bibliographystyle{plainnat}
\bibliography{refs}

\clearpage
\appendix

\section{Attenuation of Empirical Correlations}
\label{appendix:attenuation}

This appendix provides exact integral formulations for the expected empirical correlations
\[
\mathbb{E}\!\left[P(i,j)\right]
= f_{ij}\!\left(\Sigma(i,j),\,p_i,\,p_j,\ldots\right)
\]
across all three block types (\(MM\), \(VM\), \(VV\)), and establishes their strict monotonicity with respect to the corresponding latent correlations \(\Sigma(i,j)\).
All random variables are standard normal with mean \(0\) and variance \(1\).

\subsection{Notation and conditional identities}

\paragraph{Variables.}
For each index \(i\):
\[
V_i,\ M_i \sim \mathcal N(0,1), \quad
X_i = V_i \mathbf{1}\{M_i \ge \tau_i\},\quad
I_i = \mathbf{1}\{M_i \ge \tau_i\}.
\]

\paragraph{Thresholds and probabilities.}
\[
p_i = \Pr(M_i \ge \tau_i) = \bar{\Phi}(\tau_i),\qquad
\bar{\Phi}(\tau) = 1 - \Phi(\tau).
\]

\paragraph{Correlations.}
For distinct indices \(i,j\):
\[
\Sigma_{VV}(i,j)=\rho_V,\quad
\Sigma_{MM}(i,j)=\rho_M,\quad
\Sigma_{VM}(i,j)=\rho_{ij},\quad
\Sigma_{MV}(j,i)=\rho_{ji},
\]
with \(\Sigma_{VM}(i,i)=0\).

\paragraph{Bivariate Gaussian pdf.}
\[
\varphi_2(m_i,m_j;\rho_M)
=\frac{1}{2\pi\sqrt{1-\rho_M^2}}
\exp\!\left(
-\frac{m_i^2+m_j^2-2\rho_M m_i m_j}{2(1-\rho_M^2)}
\right).
\]

\paragraph{Joint survival function.}
\[
\bar{\Phi}_2(\tau_i,\tau_j;\rho_M)
=\int_{\tau_i}^{\infty}\!\int_{\tau_j}^{\infty}
\varphi_2(m_i,m_j;\rho_M)\,dm_j\,dm_i.
\]

\paragraph{Conditional means.}
For any trivariate normal \((V_i,M_i,M_j)\) with \(\operatorname{corr}(V_i,M_i)=0\),
\[
\mathbb{E}[V_i \mid M_i=m_i,\,M_j=m_j]
=\frac{\rho_{ji}}{1-\rho_M^2}\big(m_j - \rho_M m_i\big).
\]

For the bivariate pair \((M_i,M_j)\):
\[
\mathbb{E}[M_j - \rho_M M_i \mid M_i=m_i]
=\sqrt{1-\rho_M^2}\,\varphi\!\left(
\frac{\tau_j - \rho_M m_i}{\sqrt{1-\rho_M^2}}
\right)\Big/\bar{\Phi}\!\left(
\frac{\tau_j - \rho_M m_i}{\sqrt{1-\rho_M^2}}
\right)\!>\!0,
\]
implying the conditional factor in all cross-integrals is positive.

\subsection{Mask--mask correlations}

\[
I_i = \mathbf{1}\{M_i \ge \tau_i\},
\quad
I_j = \mathbf{1}\{M_j \ge \tau_j\}.
\]

\paragraph{Expectation and correlation.}
\[
\mathbb{E}[I_i] = p_i,
\quad
\mathbb{E}[I_j] = p_j,
\quad
\mathbb{E}[I_i I_j] = \bar{\Phi}_2(\tau_i,\tau_j;\rho_M),
\]
hence
\[
\boxed{
\operatorname{corr}(I_i, I_j)
=
\frac{\bar{\Phi}_2(\tau_i,\tau_j;\rho_M) - p_i p_j}
{\sqrt{p_i(1-p_i)\,p_j(1-p_j)}}
}.
\]

\paragraph{Monotonicity.}
\begin{theorem}
For any thresholds \(\tau_i,\tau_j\),
\(\operatorname{corr}(I_i, I_j)\) is strictly increasing in \(\rho_M \in (-1,1)\).
\end{theorem}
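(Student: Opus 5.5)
Since $p_i,p_j$ are fixed by the thresholds and do not depend on $\rho_M$, the denominator $\sqrt{p_i(1-p_i)p_j(1-p_j)}$ in the boxed formula is a positive constant with respect to $\rho_M$. This assumes nondegenerate marginals ($0<p_i,p_j<1$), as arranged in Section~\ref{sec:identifiability}. The subtracted term $p_ip_j$ is likewise constant. It therefore suffices to show that the joint survival function $\bar{\Phi}_2(\tau_i,\tau_j;\rho_M)$ is strictly increasing in $\rho_M$ on $(-1,1)$.

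For this I would use Plackett's identity for the bivariate normal density, $\partial_\rho \varphi_2(x,y;\rho) = \partial_x\partial_y \varphi_2(x,y;\rho)$. This can be verified directly by differentiating the explicit form of $\varphi_2$ given above, or via the characteristic function: there $\partial_\rho$ produces $-t_it_j$, which matches the Fourier transform of $\partial_x\partial_y$. Differentiating under the integral sign is justified because $\varphi_2$ and its derivatives are dominated by integrable Gaussian tails, locally uniformly in $\rho_M\in(-1,1)$. This gives
\[
\frac{\partial}{\partial \rho_M}\bar{\Phi}_2(\tau_i,\tau_j;\rho_M)
=\int_{\tau_i}^{\infty}\!\int_{\tau_j}^{\infty}\partial_{m_i}\partial_{m_j}\varphi_2(m_i,m_j;\rho_M)\,dm_j\,dm_i
=\varphi_2(\tau_i,\tau_j;\rho_M).
\]
The last equality holds because integrating the mixed derivative over the quadrant leaves only the corner value; the boundary terms at infinity vanish since $\varphi_2\to0$ there.

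The derivative $\varphi_2(\tau_i,\tau_j;\rho_M)$ is strictly positive for every finite pair of thresholds and every $\rho_M\in(-1,1)$. Hence $\bar{\Phi}_2$ is strictly increasing, and so is $\operatorname{corr}(I_i,I_j)$. Its derivative equals $\varphi_2(\tau_i,\tau_j;\rho_M)/\sqrt{p_i(1-p_i)p_j(1-p_j)}>0$.

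The main obstacle is establishing Plackett's identity cleanly. If the direct computation becomes messy, I would instead write $M_j=\rho_M M_i+\sqrt{1-\rho_M^2}\,W$ with $W$ independent of $M_i$. Then $\bar{\Phi}_2=\int_{\tau_i}^\infty \varphi(m)\,\bar{\Phi}\bigl((\tau_j-\rho_M m)/\sqrt{1-\rho_M^2}\bigr)\,dm$, which I would differentiate in $\rho_M$. After simplification this should reproduce $\varphi_2(\tau_i,\tau_j;\rho_M)$.
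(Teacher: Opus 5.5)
Your proof is correct and follows the same route as the paper. Both arguments observe that the denominator and the subtracted $p_ip_j$ do not depend on $\rho_M$, which reduces the claim to strict monotonicity of the quadrant probability $\bar{\Phi}_2(\tau_i,\tau_j;\rho_M)$. The paper only cites that monotonicity as a known fact, whereas you prove it via Plackett's identity and obtain the explicit derivative $\varphi_2(\tau_i,\tau_j;\rho_M)>0$. One small point: $0<p_i,p_j<1$ holds automatically for finite thresholds, so you need no separate nondegeneracy assumption.
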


\begin{proof}
The numerator \(\bar{\Phi}_2(\tau_i,\tau_j;\rho_M) - p_i p_j\)
is strictly increasing in \(\rho_M\) (Gaussian quadrant probability increases with correlation),
while the denominator is constant. Thus the derivative is positive.
\end{proof}

\subsection{Value--mask correlations}

\[
X_i = V_i \mathbf{1}\{M_i \ge \tau_i\}, \quad
I_j = \mathbf{1}\{M_j \ge \tau_j\}.
\]

\paragraph{Moment relations.}
\[
\mathbb{E}[X_i]=0,\quad
\operatorname{Var}(X_i)=p_i,\quad
\operatorname{Var}(I_j)=p_j(1-p_j).
\]

\paragraph{Covariance.}
Using \(\mathbb{E}[V_i \mid M_i,M_j]\),
\[
\mathbb{E}[X_i I_j]
=
\int_{\tau_i}^{\infty}\!\!\int_{\tau_j}^{\infty}
\frac{\rho_{ji}}{1-\rho_M^2}
\big(m_j - \rho_M m_i\big)
\varphi_2(m_i,m_j;\rho_M)\,dm_j\,dm_i.
\]
Thus
\[
\boxed{
\operatorname{corr}(X_i, I_j)
=
\frac{1}{\sqrt{p_i\,p_j(1-p_j)}}
\cdot
\frac{\rho_{ji}}{1-\rho_M^2}
\int_{\tau_i}^{\infty}\!\!\int_{\tau_j}^{\infty}
(m_j - \rho_M m_i)
\varphi_2(m_i,m_j;\rho_M)\,dm_j\,dm_i
}.
\]

\paragraph{Monotonicity.}
\begin{theorem}
For fixed \(\rho_M,\tau_i,\tau_j\),
\(\operatorname{corr}(X_i,I_j)\) is strictly increasing in \(\rho_{ji}\).
\end{theorem}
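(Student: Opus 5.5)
The boxed formula shows that $\operatorname{corr}(X_i,I_j)$ is linear in $\rho_{ji}$ once $\rho_M,\tau_i,\tau_j$ are fixed. So I would write it as $\operatorname{corr}(X_i,I_j)=C\,\rho_{ji}$ with
\[
C=\frac{1}{\sqrt{p_i\,p_j(1-p_j)}}\cdot\frac{1}{1-\rho_M^2}\int_{\tau_i}^{\infty}\!\!\int_{\tau_j}^{\infty}(m_j-\rho_M m_i)\,\varphi_2(m_i,m_j;\rho_M)\,dm_j\,dm_i .
\]
Strict increase in $\rho_{ji}$ is then equivalent to $C>0$. The prefactors are positive in the non-degenerate case $p_i,p_j\in(0,1)$ (excluded in Section~\ref{sec:identifiability}) with $|\rho_M|<1$. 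It therefore suffices to show that the double integral is positive.

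First, I would confirm that the boxed formula is correct. Since $\mathbb{E}[X_i]=0$, the covariance equals $\mathbb{E}[X_iI_j]$. Conditioning on $(M_i,M_j)$ and using the trivariate conditional-mean identity with ICIN ($\operatorname{corr}(V_i,M_i)=0$) gives the stated integrand. I would also note that the admissible values of $\rho_{ji}$ form an interval on which the covariance is linear, so "strictly increasing" only needs the sign of $C$.

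For the positivity I would factor $\varphi_2(m_i,m_j;\rho_M)=\varphi(m_i)\,\varphi_{\rho}(m_j\mid m_i)$, where $M_j\mid M_i=m_i\sim\mathcal N(\rho_M m_i,1-\rho_M^2)$. The inner integral is
\[
\int_{\tau_j}^{\infty}(m_j-\rho_M m_i)\,\varphi_{\rho}(m_j\mid m_i)\,dm_j .
\]
This is an integral of the centered variable $M_j-\rho_M m_i$ over an upper tail. Substituting $u=(m_j-\rho_M m_i)/\sqrt{1-\rho_M^2}$, it equals $\sqrt{1-\rho_M^2}\,\varphi\big((\tau_j-\rho_M m_i)/\sqrt{1-\rho_M^2}\big)$. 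That is the standard truncated-normal identity $\int_a^\infty u\varphi(u)\,du=\varphi(a)$. Equivalently, it is the positive conditional-mean identity stated in the notation subsection, multiplied by the tail probability.

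The inner integral is thus strictly positive for every $m_i$. Integrating it against $\varphi(m_i)>0$ over $(\tau_i,\infty)$, a set of positive measure, gives a strictly positive double integral, so $C>0$ and the claim follows.

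The main obstacle is rigor at the boundaries rather than the computation itself. The argument needs $|\rho_M|<1$ so that the conditional density exists, and $p_i,p_j\in(0,1)$ so that the normalization is finite and nonzero. If $\rho_M=\pm1$ one could argue by continuity, but I would simply restrict to the non-degenerate regime. I would also remark that if the boxed expression is read with $\rho_{ji}$ entering only linearly, the result says nothing about joint dependence on $\rho_M$. That matches the statement's "for fixed $\rho_M$".
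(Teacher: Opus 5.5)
Your proposal is correct and follows the paper's proof. Both arguments write $\operatorname{corr}(X_i,I_j)$ as $\rho_{ji}$ times a coefficient that does not depend on $\rho_{ji}$, and both reduce strict monotonicity to positivity of the double integral $J(\tau_i,\tau_j;\rho_M)$. Your explicit evaluation of the inner integral as $\sqrt{1-\rho_M^2}\,\varphi\bigl((\tau_j-\rho_M m_i)/\sqrt{1-\rho_M^2}\bigr)$ is a cleaner rendering of the paper's conditional-expectation argument; it also correctly reads that identity as conditioned on $M_j\ge\tau_j$, which the paper's notation leaves out.
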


\begin{proof}
The integral term
\[
J(\tau_i,\tau_j;\rho_M)
=\int_{\tau_i}^{\infty}\!\!\int_{\tau_j}^{\infty}
(m_j-\rho_M m_i)\varphi_2(m_i,m_j;\rho_M)\,dm_j\,dm_i
\]
is strictly positive (conditional expectation argument above),
and the denominator is constant in \(\rho_{ji}\).
Hence \(\operatorname{corr}(X_i,I_j)\) is an affine, strictly increasing function of \(\rho_{ji}\).
\end{proof}

\subsection{Value--value correlations}
\label{appendix:vv}

\[
X_i = V_i \mathbf{1}\{M_i \ge \tau_i\},
\quad
X_j = V_j \mathbf{1}\{M_j \ge \tau_j\}.
\]

\paragraph{Parameters.}
\[
\rho_V = \operatorname{corr}(V_i,V_j),\quad
\rho_M = \operatorname{corr}(M_i,M_j),\quad
\rho_{ij} = \operatorname{corr}(M_i,V_j),\quad
\rho_{ji} = \operatorname{corr}(M_j,V_i).
\]

\paragraph{Derivation sketch.}
The expected correlation follows from the Gaussian law of total covariance, conditioning on the pair of mask-latents:
\begin{enumerate}[itemsep=0.2em]
    \item Start from \(\mathbb{E}[X_i X_j] = \mathbb{E}\big[V_i V_j \,\mathbf{1}\{M_i \ge \tau_i\}\mathbf{1}\{M_j \ge \tau_j\}\big]\). Conditioning on \((M_i, M_j)\) makes the indicators deterministic.
    \item Apply the law of total covariance:
    \(\mathbb{E}[V_i V_j \mid M_i, M_j] = \operatorname{Cov}(V_i, V_j \mid M_i, M_j) + \mathbb{E}[V_i \mid M_i, M_j]\,\mathbb{E}[V_j \mid M_i, M_j]\).
    \item The conditional covariance of \((V_i, V_j)\) and the conditional means come from the standard Gaussian-conditioning formulas and are explicit affine functions of \((m_i, m_j)\). This is where the constant term and the quadratic term in \((m_i, m_j)\) below originate.
    \item Integrating these conditional expressions over the truncated bivariate Gaussian region \([\tau_i,\infty) \times [\tau_j,\infty)\) yields exactly the two-term structure displayed next: a term proportional to \(\rho_V\) (plus a correction from the conditional covariance) multiplied by the survival probability, and a second term proportional to \(\rho_{ij}\rho_{ji}\) coming from the product of conditional means.
\end{enumerate}
Conceptually, masking only attenuates the latent dependence; it cannot invert its ordering. This is why, with all other relevant latent correlations and thresholds held fixed, the value--value correlation is an affine, strictly increasing function of \(\rho_V\), as proven below.

\paragraph{Expected correlation.}
\[
\boxed{
\begin{aligned}
\operatorname{corr}(X_i,X_j)
&=\frac{1}{\sqrt{p_i p_j}}
\Bigg[
\left(\rho_V+\frac{\rho_M\rho_{ij}\rho_{ji}}{1-\rho_M^2}\right)
\bar{\Phi}_2(\tau_i,\tau_j;\rho_M)
\\[0.4em]
&\hspace{2em}
+\frac{\rho_{ij}\rho_{ji}}{(1-\rho_M^2)^2}
\!\!\int_{\tau_i}^{\infty}\!\!\int_{\tau_j}^{\infty}
\!\!\!\big(-\rho_M(m_i^2+m_j^2)+(1+\rho_M^2)m_i m_j\big)\,
\varphi_2(m_i,m_j;\rho_M)\,dm_j\,dm_i
\Bigg].
\end{aligned}
}
\]

\paragraph{Monotonicity in \(\rho_V\).}
\begin{theorem}
Fix thresholds \(\tau_i,\tau_j\) and correlations \(\rho_M,\rho_{ij},\rho_{ji}\) with \(0<p_i,p_j<1\).
Then \(\operatorname{corr}(X_i,X_j)\) is a strictly increasing affine function of \(\rho_V\) on any interval where the latent covariance \(\Sigma\) is positive semidefinite.
\end{theorem}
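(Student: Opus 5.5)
The plan is to read the claim directly off the boxed expression for $\operatorname{corr}(X_i,X_j)$, after first checking where $\rho_V$ enters that formula. Following the derivation sketch, condition on $(M_i,M_j)$. Under ICIN, $\operatorname{corr}(V_i,M_i)=\operatorname{corr}(V_j,M_j)=0$, so Gaussian conditioning gives two facts:
\begin{itemize}
\item the conditional means $\mathbb{E}[V_i\mid M_i,M_j]$ and $\mathbb{E}[V_j\mid M_i,M_j]$ depend only on $(\rho_M,\rho_{ij},\rho_{ji})$ and $(m_i,m_j)$, as in the conditional-mean identity stated earlier;
\item the conditional covariance is $\operatorname{Cov}(V_i,V_j\mid M_i,M_j)=\rho_V-c(\rho_M,\rho_{ij},\rho_{ji})$, where $c$ does not depend on $\rho_V$ or on $(m_i,m_j)$.
\end{itemize}
Hence $\rho_V$ enters $\mathbb{E}[X_iX_j]$ only through the term
\[
\rho_V\,\bar{\Phi}_2(\tau_i,\tau_j;\rho_M).
\]
The remaining terms, namely the correction $c\,\bar\Phi_2$ and the integral of the product of conditional means over $[\tau_i,\infty)\times[\tau_j,\infty)$, are independent of $\rho_V$. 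This matches the boxed formula: only the leading coefficient $\rho_V$ varies, while the bracketed correction and the double integral are fixed once $\tau_i,\tau_j,\rho_M,\rho_{ij},\rho_{ji}$ are fixed.

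Next, I would confirm that the normalization does not depend on $\rho_V$. Under the standardization used here, $\mathbb{E}[X_i]=0$: ICIN makes $V_i$ independent of $M_i$, so $\mathbb{E}[V_i\mathbf{1}\{M_i\ge\tau_i\}]=0$. Likewise $\operatorname{Var}(X_i)=p_i$. So the denominator $\sqrt{p_ip_j}$ depends only on the thresholds. It follows that
\[
\operatorname{corr}(X_i,X_j)=a\,\rho_V+b, \qquad a=\frac{\bar\Phi_2(\tau_i,\tau_j;\rho_M)}{\sqrt{p_ip_j}},
\]
with $b$ independent of $\rho_V$. This is affine in $\rho_V$.

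Strict increase then reduces to showing $a>0$. The quantity $\bar\Phi_2(\tau_i,\tau_j;\rho_M)$ is the integral of the strictly positive density $\varphi_2$ over a nonempty open quadrant, since $\tau_i,\tau_j$ are finite when $0<p_i,p_j<1$ and $|\rho_M|<1$. If $|\rho_M|=1$, the quadrant still has positive probability $\Pr(M_i\ge\tau_i,M_j\ge\tau_j)$ whenever that probability is nonzero; in that case I would work directly with $\mathbb{E}[I_iI_j]>0$ rather than the density.

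The PSD restriction matters only for well-definedness. The set of $\rho_V$ for which $\Sigma$ is PSD, with the other entries fixed, is an interval, since the PSD cone is convex. On that interval the trivariate conditioning formulas are valid, so the affine map is legitimate there.

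The main obstacle is the case of a degenerate mask pair, $\rho_M=\pm1$, where the formula's $1/(1-\rho_M^2)$ factors blow up. I would handle it by avoiding the explicit formula. I would use the coordinate-free identity $\mathbb{E}[X_iX_j]=\mathbb{E}\big[\mathbb{E}[V_iV_j\mid M_i,M_j]\,I_iI_j\big]$ and note that the derivative of $\mathbb{E}[V_iV_j\mid M]$ with respect to $\rho_V$ is $1$, which follows from the generalized Schur complement. The slope is then $\mathbb{E}[I_iI_j]/\sqrt{p_ip_j}$. This is strictly positive under the additional requirement that $\Pr(X_i\neq0,X_j\neq0)>0$, which I would state explicitly: for $\rho_M=-1$ the two dimensions may never be active together, and then the slope is zero.
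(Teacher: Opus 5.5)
Your proposal is correct and follows essentially the same route as the paper: you isolate the $\rho_V$-dependence of the boxed formula as the slope $\bar{\Phi}_2(\tau_i,\tau_j;\rho_M)/\sqrt{p_i p_j}$ and show it is strictly positive, since the quadrant probability is positive and $p_i,p_j\in(0,1)$. Your extra checks are sound refinements of that argument: that $\rho_V$ enters only through the conditional covariance, that the normalization is free of $\rho_V$, and the case $|\rho_M|=1$; in particular, your remark that the slope vanishes for $\rho_M=-1$ (exactly when $p_i+p_j\le 1$) shows that the paper's proof tacitly assumes $|\rho_M|<1$, where $\varphi_2$ is a strictly positive density.
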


\begin{proof}
From the displayed formula for \(\operatorname{corr}(X_i,X_j)\), isolate the \(\rho_V\)-dependence:
\[
\operatorname{corr}(X_i,X_j)
=
\frac{\bar{\Phi}_2(\tau_i,\tau_j;\rho_M)}{\sqrt{p_i p_j}}\;\rho_V
\;+\; C,
\]
where the constant $C$ does not depend on \(\rho_V\).

Hence
\[
\frac{\partial}{\partial \rho_V}\,\operatorname{corr}(X_i,X_j)
=
\frac{\bar{\Phi}_2(\tau_i,\tau_j;\rho_M)}{\sqrt{p_i p_j}}.
\]
Because \(-\infty<\tau_i,\tau_j<\infty\) and the bivariate normal density is strictly positive,
\(\bar{\Phi}_2(\tau_i,\tau_j;\rho_M)=\Pr(M_i\ge \tau_i,M_j\ge \tau_j)>0\).
Also \(p_i,p_j\in(0,1)\), so \(\sqrt{p_i p_j}>0\).
Therefore the derivative is strictly positive and constant in \(\rho_V\),
implying strict monotonicity on any admissible (PSD) range of \(\rho_V\).
\end{proof}

\section{Identifiability under ICIN}
\label{appendix:identifiability}

To identify all parameters of the latent Gaussian zero-inflated model from observed data, we require a structural assumption relating the value-latents $V_i$ and mask-latents $M_i$. The \emph{itemwise conditional independence} (ICIN) assumption
\[
\operatorname{corr}(V_i, M_i) = 0 \quad \text{for all } i \in \{1,\dots,d\}
\]
excludes direct dependence between the latent value and mask of the same observed dimension. This section clarifies why ICIN is both necessary and sufficient for identifiability.

\subsection{ICIN is necessary}

Without ICIN, correlations between $V_i$ and $M_i$ introduce bias in the observed mean and variance of the Gaussian component, making multiple latent parameter triples $(\mu_i,\sigma_i,\gamma_i)$ consistent with the same observed distribution.

\paragraph{Proof by contradiction.}
Consider a single dimension, with activation probability $p\in(0,1)$ and threshold $\tau=\Phi^{-1}(1-p)$. Let
\[
(V, M) \sim \mathcal N\!\left(0,
\begin{psmallmatrix}
1 & \gamma \\
\gamma & 1
\end{psmallmatrix}\right),
\qquad
X =
\begin{cases}
0, & M < \tau,\\[0.2em]
\mu + \sigma V, & M \ge \tau.
\end{cases}
\]
Let $\lambda=\phi(\tau)/(1-\Phi(\tau))$ and $b=1+\tau\lambda-\lambda^2$.
The conditional mean and variance of the observed nonzero values are
\[
\mathbb{E}[X \mid X\neq 0] = \mu + \sigma \gamma \lambda,
\qquad
\operatorname{Var}(X \mid X\neq 0) = \sigma^2 \big(1 - \gamma^2(1-b)\big).
\]
Now fix observed statistics $(m_\star,s_\star)$ of the nonzero values and activation probability $p$.
For any $\gamma\in(-1,1)$, define
\[
\sigma(\gamma)=\sqrt{\frac{s_\star}{1-\gamma^2(1-b)}},
\qquad
\mu(\gamma)=m_\star-\sigma(\gamma)\gamma\lambda.
\]
Each choice of $\gamma$ yields parameters $(\mu(\gamma), \sigma(\gamma), \gamma)$ producing the same $(m_\star, s_\star, p)$.
Therefore, $(\mu, \sigma, \gamma)$ is not uniquely determined---contradicting identifiability.
Hence, ICIN (i.e.\ $\gamma=0$) is necessary.

\subsection{ICIN is sufficient}

Under ICIN, the value- and mask-latents of each dimension are uncorrelated.
The marginal parameters $(\mu_i, \sigma_i, \tau_i)$ are then directly identified from the observed data: $\tau_i$ from the empirical activation probability, and $(\mu_i, \sigma_i)$ from the mean and variance of the nonzero entries.

With all marginal parameters known, the latent correlation matrix $\Sigma$ can be uniquely recovered from the empirical correlation matrix $P$ by applying the monotone inverse mappings derived in Section~\ref{sec:attenuation} and Appendix~\ref{appendix:attenuation}.
These mappings establish a one-to-one correspondence between each empirical block $(P_{VV}, P_{VM}, P_{MM})$ and its latent counterpart $(\Sigma_{VV}, \Sigma_{VM}, \Sigma_{MM})$, confirming sufficiency of ICIN for full model identifiability.

\section{Training Details}
\label{appendix:training}

This appendix provides additional information on the generation of synthetic training data, the network architectures used for learning the inverse mappings, and the training procedure including loss function, optimizer, and hyperparameters.

\subsection{Data generation}

Synthetic training data were generated to cover the observable manifolds of all three correlation block types (\(MM\), \(VM\), \(VV\)).
Each data point corresponds to a pair of empirical correlations and its latent counterpart, obtained through simulation under known latent parameters.

\paragraph{Latent correlation sampling.}
Each training instance corresponds to a two-dimensional zero-inflated Gaussian model as defined in the main text, with a total of four latent variables \((V_i, V_j, M_i, M_j)\).
Their joint correlation structure is represented by a \(4 \times 4\) latent correlation matrix
\[
\Sigma =
\begin{pmatrix}
1 & a & 0 & b \\
a & 1 & c & 0 \\
0 & c & 1 & d \\
b & 0 & d & 1
\end{pmatrix},
\]
where the quadrants correspond to the latent blocks
\[
\Sigma_{VV} = \begin{psmallmatrix}1 & a \\ a & 1\end{psmallmatrix}, \quad
\Sigma_{VM} = \begin{psmallmatrix}0 & b \\ c & 0\end{psmallmatrix}, \quad
\Sigma_{MM} = \begin{psmallmatrix}1 & d \\ d & 1\end{psmallmatrix}.
\]
A dense grid over \((a,b,c,d) \in [-1,1]^4\) was sampled using linearly spaced values.
Each combination was tested for positive semidefiniteness via an eigenvalue check, and only those with nonnegative eigenvalues were retained as valid latent correlation matrices.

\paragraph{Empirical observation sampling.}
For each valid \(\Sigma\), multiple activation probability pairs \((p_i, p_j)\) were drawn from a uniform grid in \([0,1]\), ensuring broad coverage across activation levels.
Given \((p_i,p_j)\), thresholds \(\tau_i=\Phi^{-1}(1-p_i)\) and \(\tau_j=\Phi^{-1}(1-p_j)\) were computed, and \(n=10^5\) samples were drawn from the corresponding four-dimensional Gaussian latent model.
Observed variables were obtained by thresholding the mask-latents, setting inactive entries to zero.
Empirical correlations \(P_{\cdot\cdot}\) were then computed from these samples and paired with the corresponding latent correlations \(\Sigma_{\cdot\cdot}\) as supervised targets.

\paragraph{Dataset composition.}
The aggregated dataset consists of independent training examples spanning the feasible correlation space.
Each entry contains the relevant activation probabilities and empirical correlations as input features, and the associated latent correlation value as target.
This ensures dense and uniform coverage of the inverse mapping domains for all three block types.

\subsection{Network architecture}

Each inverse mapping was implemented as a compact multilayer perceptron (MLP) trained to predict a latent correlation value from observable quantities such as activation probabilities and empirical correlations.
Separate networks were trained for each block type (\(MM, VM, VV\)), differing only in input dimensionality.

\paragraph{Model structure.}
The MLP consists of four hidden layers, organized into two functional stages:
\[
x \;\mapsto\;
h_1 = \frac{W_1 x}{\operatorname{softplus}(W_2 x) + 0.1},
\quad
h_2 = \frac{W_3 h_1}{\operatorname{softplus}(W_4 h_1) + 0.1},
\quad
\hat{y} = \tanh(W_{\text{out}} h_2).
\]
This structure introduces elementwise divisions by smooth positive functions, acting as adaptive normalizers and stabilizing the learned mappings near correlation boundaries.
The final output is squashed to \([-1,1]\) via a hyperbolic tangent, matching the valid range of correlation coefficients.

\paragraph{Design rationale.}
The architecture was identified through manual model search as the best-performing variant in terms of validation accuracy.
Mappings of the form \((p_i, p_j, P_{\cdot\cdot}) \mapsto \Sigma_{\cdot\cdot}\) exhibit steep nonlinearities near the edges of the probability domain,
specifically for \(p_i, p_j \in [0, \varepsilon] \cup [1-\varepsilon, 1]\) with small \(\varepsilon\).
In these boundary regions, small changes in the inputs can lead to pronounced variations in the latent correlation.
The use of multiplicative normalization and smooth nonlinearities enables the network to model such sharp transitions while maintaining monotonicity across the full domain.

\paragraph{Dimensionality and precision.}
The input dimensionality varies with the block type and reflects the number of observables in each mapping.
Hidden layers contain 32 units each, providing sufficient expressivity for smooth monotonic mappings while remaining lightweight for efficient evaluation.
All models were trained in \texttt{torch.float32} with high matrix multiplication precision enabled (\texttt{torch.set\_float32\_matmul\_precision('high')}) to ensure stable convergence and consistent estimates across hardware platforms.

\subsection{Training procedure}

Each network was trained in supervised fashion to regress from observable features
(e.g.\ activation probabilities and empirical correlations)
to the corresponding latent correlation coefficient.
All training was performed using synthetic data only, with independent models fitted for each block type.

\paragraph{Loss and optimization.}
Training minimized the mean-squared error (MSE) between the predicted and true latent correlations.
Optimization used the Adam algorithm with a learning rate of \(10^{-3}\),
and mini-batches of size 128.
Gradients were backpropagated through all layers, and parameters were updated once per batch.

\paragraph{Data splitting and schedule.}
For each block type, the full dataset was randomly divided into 80\% training and 20\% validation subsets.
Validation loss was monitored throughout training, and mean absolute deviation was tracked as a secondary stability metric.
Training was conducted for a fixed number of epochs per block type, reflecting convergence behavior observed during experimentation:
\[
\text{MM: }1200~\text{epochs},
\quad \text{VM: }250~\text{epochs},
\quad \text{VV: }250~\text{epochs.}
\]
All runs employed shuffled mini-batches to ensure uniform coverage of the input space.

\paragraph{Hardware and precision.}
Models were trained on GPU when available (\texttt{device="cuda"}),
using standard \texttt{torch.float32} arithmetic with high matrix multiplication precision enabled.
On an RTX~3090, complete training of all three block-type networks can be performed in under two hours.

\section{Sparse EA Baseline Details}
\label{appendix:sparse-ea}

This appendix specifies the hand-crafted sparse evolutionary algorithm used as a baseline in Section~\ref{sec:baselines}. The algorithm shares its outer loop with all other methods: the same initial population (sampled from a zero-inflated Gaussian with $\mu_0{=}0$, $\sigma_0{=}2$, $p_0{=}0.5$, $\Sigma_0{=}I$), the same archive-based top-$k$ selection over all previously evaluated individuals, the same population size ($N{=}500$), elite size ($K{=}100$), number of generations ($T{=}100$), and the same penalized fitness. It differs only in the variation step: instead of refitting a distribution to the elites and sampling from it, each of the $N{-}K$ offspring is generated by drawing two parents uniformly at random (with replacement) from the $K$ elites, applying crossover, and then applying mutation.

\paragraph{Crossover.}
With probability $1/2$, uniform crossover is performed: each coordinate of the child is copied from the first or the second parent with equal probability. Otherwise, the child is a plain copy of the first parent. Because coordinates are copied verbatim, exact zeros are inherited, so crossover recombines sparsity patterns as well as active values.

\paragraph{Mutation.}
Each offspring undergoes exactly one mutation event. The event type is drawn from three categories with fixed weights $(w_{\text{off}}, w_{\text{on}}, w_{\text{perturb}}) = (1, 1, 10)$, normalized to probabilities $(1/12,\, 1/12,\, 5/6)$:
\begin{itemize}[itemsep=0.2em]
    \item \textbf{Deactivation} (probability $1/12$): a uniformly chosen \emph{active} coordinate is set to exactly zero.
    \item \textbf{Activation} (probability $1/12$): a uniformly chosen \emph{inactive} coordinate is assigned a fresh value drawn from $\mathcal{N}(0, \sigma_{\text{init}}^2)$ with $\sigma_{\text{init}} = 2$, matching the initialization scale.
    \item \textbf{Perturbation} (probability $5/6$): a uniformly chosen \emph{active} coordinate receives additive Gaussian noise drawn from $\mathcal{N}(0, \sigma_{\text{noise}}^2)$ with $\sigma_{\text{noise}} = 0.1$.
\end{itemize}
If the drawn event is not applicable---deactivation or perturbation when no coordinate is active, or activation when none is inactive---a uniformly chosen coordinate (active or not) is instead perturbed with noise from $\mathcal{N}(0, \sigma_{\text{noise}}^2)$.

Note that this baseline embodies exactly the kind of operator design the ZIG-EDA avoids: the on/off moves, their relative frequencies, the re-activation scale $\sigma_{\text{init}}$, and the perturbation scale $\sigma_{\text{noise}}$ are all hand-chosen and couple the algorithm to the problem instance.

\section{Full Baseline Comparison Tables}
\label{appendix:tables}

This appendix tabulates the complete numerical results of the Lunar Lander comparison study described in Section~\ref{sec:baselines}. All values are means $\pm$ standard deviations over $10$ independent runs.

\begin{table}[h]
\caption{Final unpenalized scores in the Lunar Lander task after $100$ generations.}
\label{tab:final_scores}
\begin{center}
\begin{tabular}{lc}
\toprule
\textbf{Method} & \textbf{Mean $\pm$ Std} \\
\midrule
Dense EDA & $301.63 \pm 7.71$ \\
Sparse EA & $303.59 \pm 4.73$ \\
Ad-hoc sparse EDA & $293.69 \pm 13.18$ \\
ZIG-EDA (ours) & $\mathbf{308.03 \pm 3.51}$ \\
\bottomrule
\end{tabular}
\end{center}
\end{table}

\begin{table}[h]
\caption{Arrival times: number of generations needed until the environment return first reaches the threshold $T$. Means are taken over the runs that reached the threshold; cases with fewer than $10$ such runs are annotated.}
\label{tab:arrival_times}
\begin{center}
\begin{tabular}{lccc}
\toprule
\textbf{Method} & $T=200$ & $T=250$ & $T=300$ \\
\midrule
Dense EDA & $9.30 \pm 1.42$ & $12.30 \pm 1.49$ & $38.00 \pm 9.83$ {\footnotesize(8/10 runs)} \\
Sparse EA & $8.20 \pm 2.53$ & $10.30 \pm 3.43$ & $36.80 \pm 13.74$ \\
Ad-hoc sparse EDA & $11.00 \pm 4.06$ & $14.00 \pm 6.60$ & $43.67 \pm 26.01$ {\footnotesize(6/10 runs)} \\
ZIG-EDA (ours) & $\mathbf{6.60 \pm 1.07}$ & $\mathbf{7.80 \pm 0.79}$ & $\mathbf{20.50 \pm 3.54}$ \\
\bottomrule
\end{tabular}
\end{center}
\end{table}

\begin{table}[h]
\caption{Number of active parameters (out of $90$) of the best individual at fixed generations.}
\label{tab:sparsity}
\begin{center}
\begin{tabular}{lcccc}
\toprule
\textbf{Method} & \textbf{Gen 25} & \textbf{Gen 50} & \textbf{Gen 75} & \textbf{Gen 100} \\
\midrule
Dense EDA & $90.00 \pm 0.00$ & $90.00 \pm 0.00$ & $90.00 \pm 0.00$ & $90.00 \pm 0.00$ \\
Sparse EA & $30.30 \pm 3.13$ & $18.00 \pm 3.40$ & $12.40 \pm 1.71$ & $11.60 \pm 1.58$ \\
Ad-hoc sparse EDA & $39.60 \pm 3.98$ & $37.70 \pm 3.80$ & $37.30 \pm 4.90$ & $36.70 \pm 4.67$ \\
ZIG-EDA (ours) & $\mathbf{16.10 \pm 3.73}$ & $\mathbf{12.20 \pm 1.75}$ & $12.00 \pm 1.83$ & $11.90 \pm 1.79$ \\
\bottomrule
\end{tabular}
\end{center}
\end{table}

\end{document}